\definecolor{red}{HTML}{E51400} 
\definecolor{blue}{HTML}{0050EF} 
\definecolor{green}{HTML}{008A00} 
\definecolor{purple}{HTML}{AA00FF} 
\definecolor{orange}{HTML}{FF7F00}
\definecolor{gray}{HTML}{848482}
\newcommand{\Reg}{\text{\rm Reg}}
\newcommand{\SA}{\text{\rm SA}}
\DeclarePairedDelimiter\abs{\lvert}{\rvert}
\newcommand{\inner}[1]{ \left\langle {#1} \right\rangle }
\newcommand{\norm}[1]{\left\|{#1}\right\|}
\newcommand{\mc}[1]{\mathcal{#1}}
\newcommand{\mb}[1]{\mathbb{#1}}
\newcommand{\mr}[1]{\mathrm{#1}}
\newcommand\blfootnote[1]{%
  \begingroup
  \renewcommand\thefootnote{}\footnote{#1}%
  \addtocounter{footnote}{-1}%
  \endgroup
}
\newtheorem{theorem}{Theorem}
\newtheorem{lemma}{Lemma}
\newtheorem{corollary}[theorem]{Corollary}
\newtheorem{definition}{Definition}
\DeclareMathOperator*{\argmax}{\arg\!\max}
\newcommand{\compilefullversion}{true}
    \newcommand{\OnlyInFull}[1]{}
    \newcommand{\OnlyInShort}[1]{#1}
    \newcommand{\OnlyInFull}[1]{#1}%
    \newcommand{\OnlyInShort}[1]{}%
\newcommand{\compilehidecomments}{false}
    \newcommand{\wei}[1]{}
    \newcommand{\haoyu}[1]{}
    \newcommand{\kai}[1]{}
    \newcommand{\wei}[1]{{\color{blue!50!black}  [\text{Wei:} #1]}}
    \newcommand{\haoyu}[1]{{\color{brown!60!black} [\text{Haoyu:} #1]}}
    \newcommand{\kai}[1]{{\color{purple} [\text{Kai:} #1]}}
\title{Locally Differentially Private \\ (Contextual) Bandits Learning}
\author{%
  Kai Zheng$^{1,2}$ \\
  \texttt{zhengk92@gmail.com} 
  \And
  Tianle Cai$^{3,4}$ \\
  \texttt{caitianle1998@pku.edu.cn}
  \AND
    Weiran Huang$^{5}$ \\
    \texttt{weiran.huang@outlook.com}
  \And
  Zhenguo Li$^{5}$ \\
  \texttt{li.zhenguo@huawei.com}
  \And
  Liwei Wang$^{1,2,*}$ \\
  \texttt{wanglw@cis.pku.edu.cn}
}
\begin{document}
\blfootnote{$^{1}~ $ Key Laboratory of Machine Perception, MOE, School of EECS, Peking University}
\blfootnote{$^{2}~ $ Center for Data Science, Peking University}
\blfootnote{$^{3}~ $ School of Mathematical Sciences, Peking University}
\blfootnote{$^{4}~ $ Haihua Institute for Frontier Information Technology}
\blfootnote{$^{5}~ $ Huawei Noah's Ark Lab}

\blfootnote{$^{*}$~ Corresponding author.}
\maketitle

\begin{abstract}  
We study locally differentially private (LDP) bandits learning in this paper. First, we propose simple black-box reduction frameworks that can solve a large family of context-free bandits learning problems with LDP guarantee. 
Based on our frameworks, we can improve previous best results for private bandits learning with one-point feedback, such as private Bandits Convex Optimization, and obtain the first result for Bandits Convex Optimization (BCO) with multi-point feedback under LDP. 
LDP guarantee and black-box nature make our frameworks more attractive in real applications compared with previous specifically designed and relatively weaker differentially private (DP) context-free bandits algorithms. 
Further, we extend our $(\varepsilon, \delta)$-LDP algorithm to Generalized Linear Bandits, which enjoys a sub-linear regret $\tilde{\mc{O}}(T^{3/4}/\varepsilon)$ and is conjectured to be nearly optimal. Note that given the existing $\Omega(T)$ lower bound for DP contextual linear bandits \cite{shariff2018differentially}, our result shows a fundamental difference between LDP and DP contextual bandits learning.
\end{abstract}

\section{Introduction}

As a general and powerful model, (contextual) bandits learning has attracted lots of attentions both in theoretical study and real applications \cite{bubeck2012regret,lattimore2018bandit}, from personalized recommendation to clinical trails. However, existing algorithms designed for these applications heavily rely on user's sensitive data, and an off-the-shelf use of such algorithms may leak user's privacy and bring concerns to future users for sharing their data with related institutions or corporations. For example, in classification or regression tasks, we update our model according to the feature and label of each user. In Multi-Armed Bandits (MAB), we estimate underlying rewards of all arms based on user's feedback. A solid notion of data privacy is Differential Privacy (DP) proposed by \citet{dwork2006calibrating} in 2006. Since then, differentially private bandits learning has been studied extensively. 

Among context-free bandits learning, Bandits Convex Optimization (BCO) is one of the fundamental problems. \citet{thakurta2013nearly} designed the first $(\varepsilon, \delta)$-\textit{differentially private} adversarial BCO algorithm with $\tilde{\mc{O}}\left(T^{3/4}/\varepsilon\right)$ regret for convex loss and $\tilde{\mc{O}}\left(T^{2/3}/\varepsilon\right)$ regret for strongly convex loss, which nearly match current best non-private results under the same conditions \cite{agarwal2010optimal,flaxman2005online}\footnote{Though \citet{bubeck2017kernel} designed a polynomial time algorithm for general BCO with $\tilde{\mc{O}}(T^{1/2})$ regret, it is far from practical, so we don't consider its result in this paper, but of course we can plug that algorithm into our framework to obtain optimal $\tilde{\mc{O}}(T^{1/2}/\varepsilon)$ bound for general private BCO.}. 
However, when loss functions are further smooth, current best non-private bounds for convex\slash strongly convex bandits are $\tilde{\mc{O}}\left(T^{2/3}\right)$ \cite{saha2011improved} and $\tilde{\mc{O}}\left(T^{1/2}\right)$ \cite{hazan2014bandit} respectively, and previous approaches \cite{thakurta2013nearly,agarwal2017price} seem hard to achieve such regret bounds in the same setting under privacy constraint (see Section \ref{subsec:one point feedback} for more discussions). 
Besides BCO and its extension to multi-point feedback \cite{agarwal2010optimal}, context-free bandits also include other important cases, such as Multi-Armed Bandits (MAB), and there have been lots of algorithms designed for \textit{differentially private} MAB \cite{thakurta2013nearly,mishra2015nearly,tossou2016algorithms,tossou2017achieving,agarwal2017price,sajed2019optimal}, either in stochastic or adversarial environment. 
As one can see, there are many different settings in context-free bandits learning, and existing differentially private algorithms are carefully designed for each one of them, which makes them relatively inconvenient to be used. Besides, their theoretical performance is analyzed separately and rather complicated. Some of them do not match corresponding non-private results.

Different with context-free bandits, usually there are certain contexts in real applications, such as user profile that contains user's features. Advanced bandit model uses these contexts explicitly to find the corresponding best action at each round, which is called contextual bandits. Two representatives are contextual linear bandits \cite{li2010contextual} and Generalized Linear Bandits \cite{filippi2010parametric}. Given benefits of contextual bandits, one may also wish to design corresponding private mechanisms. However, \citet{shariff2018differentially} proved that any \textit{differentially private} contextual bandit algorithm would cause an $\Omega(T)$ regret bound. Hence, they considered a relaxed definition of DP called \textit{joint differential privacy}, and proposed an algorithm based on LinUCB \cite{abbasi2011improved} with regret bound $\tilde{\mc{O}}\left(T^{1/2}/\varepsilon\right)$ \cite{shariff2018differentially} under $\varepsilon$-\textit{joint differential privacy}.  

\renewcommand{\arraystretch}{1.3}
\begin{table*}[t] 
  \makebox[\linewidth]{
  \begin{adjustbox}{max width = 1\textwidth}
  \begin{tabular}{| c | c | c | c | c | }
    \hline
    Type & \multicolumn{2}{|c|}{Problem}  & Our Regret Bound & Best Non-Private Regret\\ 
    \hline 
    \multirow{6}*{Context-Free} & \multirow{4}*{BCO}  & Convex & $\tilde{\mc{O}}\left(T^{3/4}/\varepsilon\right)$ & $\tilde{\mc{O}}\left(T^{3/4}\right)$ \cite{flaxman2005online}\\  
    \cline{3-5}
    & & Convex + Smooth & $\tilde{\mc{O}}\left(T^{2/3}/\varepsilon\right)$ & $\tilde{\mc{O}}\left(T^{2/3}\right)$ \cite{saha2011improved}\\ 
    \cline{3-5}
    & & S.C & $\tilde{\mc{O}}\left(T^{2/3}/\varepsilon\right)$ & $\tilde{\mc{O}}\left(T^{2/3}\right)$ \cite{agarwal2010optimal}\\ 
    \cline{3-5}
    & & S.C + Smooth & $\tilde{\mc{O}}\left(T^{1/2}/\varepsilon\right)$ & $\tilde{\mc{O}}\left(T^{1/2}\right)$ \cite{hazan2014bandit}\\
    \cline{2-5}
    & \multirow{2}*{MP-BCO}  & Convex & $\tilde{\mc{O}}\left(T^{1/2}/\varepsilon^2\right)$ & $\tilde{\mc{O}}\left(T^{1/2}\right)$ \cite{agarwal2010optimal}\\  
    \cline{3-5}
    & & Strongly Convex &  $\tilde{\mc{O}}\left(\log T/\varepsilon^2\right)$ & $\tilde{\mc{O}}\left(\log T\right)$ \cite{agarwal2010optimal}\\ 
    \hline
    \multirow{2}*{Context-Based} & \multicolumn{2}{|c|}{Contextual Linear Bandits} & $\tilde{\mc{O}}(T^{3/4}/\varepsilon)$ & $\tilde{\mc{O}}(T^{1/2})$ \cite{abbasi2011improved} \\
    \cline{2-5}
    & \multicolumn{2}{|c|}{Generalize Linear Bandits} &  $\tilde{\mc{O}}(T^{3/4}/\varepsilon)$ & $\tilde{\mc{O}}(T^{1/2})$ \cite{li2017provably} \\
    \hline
  \end{tabular} 
  \end{adjustbox}
  }
  \caption{Summary of our main results under $(\varepsilon, \delta)$-LDP, where $\tilde{\mc{O}}$ notation hides dependence over dimension $d$ and other poly-logarithmic factors. (S.C means Strongly Convex, MP means Multi-Point)}
  \label{tab:comparisons}
\end{table*}

Note all of previous study focus on \textit{differential privacy} or its relaxed version. Compared with Differential Privacy, most of time Local Differential Privacy (LDP) \cite{kasiviswanathan2011can,duchi2013localb} is a much stronger and user-friendly standard of privacy and is more appealing in real applications \cite{cormode2018privacy}, as LDP requires protecting each user's data before collection. 

For context-free bandits, it is not hard to see algorithms with LDP guarantee protects DP automatically. However in contextual bandits, things become more delicate. These two definitions are not comparable as they have different interpretations about the output sequence, and traditional post-processing property cannot be used here to imply LDP is more rigorous than DP. In detail, DP regards predicted actions for contexts as the output sequence. Since optimal action varies from round to round in contextual bandits, it is not surprising there is a lower bound of linear regret in this case \cite{shariff2018differentially}, as DP requires outputs to be nearly the same for any two neighboring datasets/contexts, which essentially contradicts with the goal of personalized prediction in contextual bandits. In contrast, LDP regards the collected information from users as ``output sequence'' and has no restriction on predicted actions, which is more reasonable as these actions are predicted on the local side based on local personal information and will not be released to public. Therefore, LDP seems like a more appropriate standard for contextual bandits compared with DP, and maybe there is hope to bypass the lower bound proved for DP contextual bandits.

Given above discussions, a natural question arises: can we design simple and effective algorithms for bandits learning with LDP guarantee?


\textbf{Our Contributions:} In this work, we study both context-free bandits\footnote{Note that adaptive adversary is ambiguous in bandits setting \cite{arora2012online}, so we only consider oblivious adversary throughout the paper.} and contextual bandits with LDP guarantee. Our contributions are summarized as follows: (see Table \ref{tab:comparisons} for more details)

{\bf (1)} We propose a simple reduction framework motivated by \citet{agarwal2017price} for a large class of context-free bandits learning problems with LDP guarantee, including BCO, MAB and Best Arm Identification (see Section 3.1 and Appendix\footnote{Appendix could be found in the full version \cite{zheng2020locally}.} 
\ref{appendix: examples}). Equipped with different non-private algorithms, the utility of our framework can match corresponding best non-private performances, and these results are obtained through a unified and simple analysis; 

{\bf (2)} By modifying above framework slightly, we extend our algorithm to BCO with multi-point feedback \cite{agarwal2010optimal}, and design the \textit{first} LDP multi-point BCO algorithm with nearly optimal guarantees;

{\bf (3)} For contextual bandits including contextual linear bandits and more difficult generalized linear bandits, we propose algorithms with regret bounds $\tilde{\mc{O}}(T^{3/4}/\varepsilon)$ under $(\varepsilon, \delta)$-LDP , which are conjectured to be optimal. Note that these results show a fundamental difference between LDP and DP contextual bandits as discussed above. 

All our results can be extended in parallel to $\varepsilon$-LDP if using Laplacian noise instead of Gaussian noise. Here, we only focus on $(\varepsilon, \delta)$-LDP.

\textbf{Comparison with Prior Work:} As mentioned earlier, for context-free bandits, nearly all of previous work focused on \textit{differentially private} bandits learning, rather than stronger LDP guarantee. Only algorithms proposed in \citet{tossou2017achieving} and \citet{agarwal2017price} for adversarial MAB can be converted to LDP version easily and obtain almost the same results. Though both their algorithms and ours are nearly the same in MAB, which is a very special case of bandits learning, our analysis is different, and we prove a new result for MAB with LDP guarantee as a side-product, which achieves nearly optimal regret bound under \textit{both adversarial and stochastic environment simultaneously} (Appendix \ref{appendix: private MAB}). What's more, our results apply to more general bandits learning. For more comparison with \citet{agarwal2017price}, see Section \ref{subsec:one point feedback}. Note, even in stronger LDP context-free bandits, our framework can achieve improved regret bounds for smooth BCO compared with previous results under weaker DP guarantee \cite{thakurta2013nearly}. Besides, to the best of our knowledge, we give the first results for contextual bandits under LDP.

\section{Preliminaries}\label{sec:prel}
{\bf Notations:} $[p] = \{1,2,\cdots, p\}$. $d$ is the dimension of decision space, and $e_i$ represents $i$-th basis vector. For a vector $x$ and a matrix $M$, define $\norm{x}_M:=\sqrt{x^\top Mx}$. Given a set $\mc{W}$, we define the projection into this set as $\Pi_{\mc{W}}(\cdot)$. 


Suppose the server collects certain information from each user with data domain $\mc{C}$. $\mc{C}$ can be the range of loss values in context-free bandits, or both contexts and losses/rewards in contextual bandits. Now we define LDP rigorously:  
\begin{definition}[LDP]
    A mechanism $Q: \mc{C} \rightarrow \mc{Z}$ is said to protect $(\varepsilon, \delta)$-LDP, if for any two data $x, x' \in \mc{C}$, and any (measurable) subset $U \subset \mc{Z}$, there is 
    \begin{align*}
        \Pr[Q(x) \in U] \leqslant e^{\varepsilon} \Pr[Q(x') \in U] + \delta
    \end{align*}
    In particular, if $\mathcal{Q}$ preserves $(\varepsilon,0)$-LDP, we call it $\varepsilon$-LDP.
\end{definition}

Now, we introduce a basic mechanism in LDP literature -- Gaussian Mechanism. Given any function $h: \mc{C} \rightarrow \mb{R}^d$. Define $\Delta := \max_{x,x' \in \mc{C}} \norm{h(x)-h(x')}_2$, then Gaussian Mechanism is defined as $h(x)+ Y$, where random vector $Y$ is sampled from Gaussian distribution $\mc{N}(0, \sigma^2 \mr{I}_d)$ with $\sigma = \frac{\Delta\sqrt{2\ln(1.25/\delta)}}{\varepsilon}$. One can prove Gaussian Mechanism preserves $(\varepsilon, \delta)$-LDP \cite{dwork2014algorithmic}.

Next, we define the common strong convexity and smoothness for a function $f$. 
\begin{definition}
    We say that a function $f: \mc{X} \rightarrow \mb{R}$ is $\mu$-strongly convex if there is: $f(x) - f(y) \leqslant \nabla f(x)^\top (x-y) - \frac{\mu}{2} \norm{x-y}_2^2$.
    We say that a function $f: \mc{X} \rightarrow \mb{R}$ is $\beta$-smooth if it satisfies the following inequality: $\left| f(x) - f(y) - \nabla f(y)^\top (x-y) \right| \leqslant \frac{\beta}{2} \norm{x-y}_2^2$
\end{definition}



\section{Nearly Optimal Context-Free Bandits Learning with LDP Guarantee}
In this section, we consider private context-free bandits learning with LDP guarantee, including bandits with one-point and multi-point feedback. As the following theorem shows, LDP is much stronger than DP in this setting (see Appendix \ref{appendix: DP} for the definition of DP in streaming setting and the proof), therefore it is more difficult to design algorithms under LDP with nearly optimal guarantee.

\begin{theorem}
\label{theorem: DPLDP}
If an algorithm $\mc{A}$ protects $\varepsilon$-LDP, then any algorithm based on the output of $\mc{A}$ on a sequence of users  guarantees $\varepsilon$-DP in streaming setting.
\end{theorem}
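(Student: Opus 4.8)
The plan is to view the output of $\mc{A}$ on the whole user stream as a \emph{transcript} $Z = (z_1,\dots,z_T)$ of privatized messages, one per user, and to reduce the streaming-DP claim about the downstream algorithm to an $\varepsilon$-DP statement about this transcript followed by post-processing. Concretely, I would first observe that since the downstream algorithm operates solely on $Z$ — by the nature of LDP it never sees any raw datum $x_i$ directly — it is a post-processing of $Z$; by the post-processing property of DP it therefore suffices to prove that the map $X \mapsto Z$ is itself $\varepsilon$-DP with respect to neighboring input streams, where two streams $X=(x_1,\dots,x_T)$ and $X'=(x_1',\dots,x_T')$ are neighbors if they agree in every coordinate except one position $j$.

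The core step fixes such a neighboring pair and compares the laws of $Z$ under $X$ and $X'$ via the chain rule
\begin{align*}
p(z_1,\dots,z_T \mid X) = \prod_{i=1}^{T} p_i\!\left(z_i \mid z_{<i}, x_i\right),
\end{align*}
where $p_i(\cdot \mid z_{<i}, x_i)$ is the conditional law of the $i$-th privatized message. This factor may be adaptive, since in a streaming mechanism the query issued to user $i$ can depend on the previously collected messages $z_{<i}$, but for \emph{every} fixed history it is an $\varepsilon$-LDP map of $x_i$. I would then note that for $i \neq j$ the factors coincide under $X$ and $X'$: the input $x_i = x_i'$ is unchanged and, given the same realized history $z_{<i}$, the mechanism is literally the same function, so in the likelihood ratio all factors cancel except the $j$-th. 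The $j$-th factor is bounded by $e^{\varepsilon}$ precisely because $\mc{A}$ is $\varepsilon$-LDP on user $j$'s datum; integrating against the common law of the remaining coordinates yields $\Pr[Z \in S] \leqslant e^{\varepsilon}\,\Pr[Z' \in S]$ for every measurable $S$, i.e. $\varepsilon$-DP of the transcript. To avoid assuming that densities exist, I would run the same argument set-theoretically, conditioning on $z_{<j}$ and $z_{>j}$ and applying the LDP inequality to the induced slice set in the $z_j$-coordinate.

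The step I expect to be the main obstacle — and the conceptual heart of why LDP is \emph{stronger} than DP here rather than incurring a $T$-fold composition loss — is correctly handling the adaptivity. Naively, changing $x_j$ seems to perturb not only $z_j$ but, through the server's evolving state, the distribution of every later message $z_{j+1},\dots,z_T$, which would threaten a $T\varepsilon$ blow-up. The resolution is that the perturbation is confined to a single factor of the chain-rule factorization: conditioned on a fixed realized history, each later factor depends only on the unchanged input $x_i$ and is therefore identical under $X$ and $X'$, so the likelihood ratio telescopes down to the single $j$-th factor bounded by $e^{\varepsilon}$. Making this conditioning argument airtight (as opposed to the false intuition of cumulative leakage) is the crux; once it is in place, the post-processing step closing the reduction to the downstream algorithm is routine.
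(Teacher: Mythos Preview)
Your proposal is correct and matches the paper's own proof essentially line for line: reduce to the transcript via post-processing, factor the joint law of $(z_1,\dots,z_T)$ by the chain rule, observe that all factors with $i\neq j$ coincide given the same realized history, and bound the single $j$-th factor by $e^{\varepsilon}$ using the LDP guarantee. Your explicit discussion of adaptivity and the density-free formulation are more careful than the paper's version, but the underlying argument is the same.
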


\subsection{Private Bandits Learning with One-Point Feedback}
\label{subsec:one point feedback}
Bandits learning with one-point feedback includes several important cases, such as BCO, MAB, and Best Arm Identification (BAI). Generally speaking, we need to choose an action in the decision set at each round based on all previous information, then receive corresponding loss value of the action we choose. Most of time, our goal is to design an algorithm to minimize regret (it will be defined clearly later) compared with any fixed competitor.

Different with previous work \cite{thakurta2013nearly,mishra2015nearly,tossou2016algorithms,tossou2017achieving,sajed2019optimal}, which designed delicate algorithms for different bandit learning problems under DP, here we propose a general framework to solve all of them within a unified analysis under stronger LDP. Our general private framework is shown in Algorithm \ref{algorithm:reduction bandit}, based on a pre-chosen non-private black-box bandits learning algorithm $\mc{A}$. Definitions of $\mc{X}, f_t$ and the choice of $\mc{A}$ in Algorithm \ref{algorithm:reduction bandit} will be made clear in concrete settings below. Here we only assume all $f_t(x)$ are bounded by a constant $B$, i.e., $\forall x \in \mc{X}, t \in [T], |f_t(x)|\leqslant B$. 

For private linear bandits learning, \citet{agarwal2017price} also propose a general reduction framework that can achieve nearly optimal regret. The key idea is to inject a linear perturbation $\langle n_t, x_t \rangle$ to the observed value $f_t(x_t)$ at each round, where $x_t$ is the current decision strategy and $n_t$ is fresh \textit{noise vector} sampled from a predefined distribution. Because of the special form of linear loss, their approach actually protects data sequence in the \textit{functional} sense, i.e., it is equivalent to disturbing original linear loss function $f_t(x)$ with noisy function $n_t^\top x$. However, this approach cannot protect privacy when loss functions are nonlinear, as injected noise depends on strategy $x_t$. Just consider $x_t = 0$, then it may leak the information of $f_t$ as values of different nonlinear functions can be different at point $x_t = 0$ and there is no noise at all if we use perturbation $\langle n_t, x_t \rangle$. Instead, our main idea is to inject fresh \textit{noise variable} directly to the observed loss value at each round, which doesn't rely on $x_t$ any more. Intuitively, this approaches looks more natural as bandits learning algorithms only use the information of these observed loss values instead of loss functions. 

\begin{algorithm}[t!]
 \caption{One-Point Bandits Learning-LDP}
\label{algorithm:reduction bandit}
	  \textbf{Input}: non-private algorithm $\mc{A}$, privacy parameters $\varepsilon, \delta$ \\
	  \textbf{Initialize:} set $\sigma = \frac{ 2B\sqrt{2\ln(1.25/\delta)}}{\varepsilon}$ \\
	 \For{$t=1,2, \ldots$}{
      Server plays $x_t \in \mc{X}$ returned by $\mc{A}$;\\
      User $t$ suffers loss $f_t(x_t)$ and sends $f_t(x_t) + Z_t$ to $\mc{A}$ in the server, where $Z_t \sim \mc{N}(0, \sigma^2)$; \\
      $\mc{A}$ receives $f_t(x_t) + Z_t$ and calculates $x_{t+1}$
     }
\end{algorithm}

Obviously, the LDP guarantee of Algorithm \ref{algorithm:reduction bandit} is followed directly from basic Gaussian mechanism.  

\begin{theorem}
\label{theorem: one point privacy}
Algorithm \ref{algorithm:reduction bandit} guarantees $(\varepsilon, \delta)$-LDP.
\end{theorem}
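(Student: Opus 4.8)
The plan is to verify that each user's contribution to the server is a single application of the Gaussian Mechanism and then invoke the LDP guarantee of that mechanism directly. The key observation is that user $t$ only ever sends the single scalar $f_t(x_t) + Z_t$ to the server, where $Z_t \sim \mc{N}(0,\sigma^2)$ with $\sigma = \frac{2B\sqrt{2\ln(1.25/\delta)}}{\varepsilon}$. Crucially, the action $x_t$ is chosen by $\mc{A}$ in the server \emph{before} user $t$ acts, so from the viewpoint of protecting user $t$'s private data, $x_t$ is a fixed (non-private) quantity and the only randomness injected on the user side is $Z_t$.

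First, I would fix an arbitrary round $t$ and view the map from user $t$'s private data to the transmitted message as the function $h(\cdot)$ evaluated at that data, plus Gaussian noise. Concretely, with $x_t$ held fixed, define $h$ so that the released value is $h(\text{data}) + Z_t$; here the relevant quantity $h$ returns is the scalar loss $f_t(x_t)$. Second, I would bound the sensitivity $\Delta = \max_{x,x'} \norm{h(x)-h(x')}_2$. Since every loss value satisfies $|f_t(x)| \leqslant B$ by the standing assumption, any two possible loss values differ by at most $2B$, so $\Delta \leqslant 2B$. Third, I would plug this sensitivity into the Gaussian Mechanism: the preliminaries tell us that adding $\mc{N}(0,\sigma^2 \mr{I}_d)$ noise with $\sigma = \frac{\Delta\sqrt{2\ln(1.25/\delta)}}{\varepsilon}$ yields $(\varepsilon,\delta)$-LDP. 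Matching $\Delta = 2B$ against the chosen $\sigma = \frac{2B\sqrt{2\ln(1.25/\delta)}}{\varepsilon}$ confirms the noise scale is exactly calibrated, so each user's single release is $(\varepsilon,\delta)$-LDP.

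Finally, since LDP is a property of the per-user release mechanism, and each user $t$ transmits only one Gaussian-Mechanism output (the noise $Z_t$ is drawn freshly and independently, and no user sends more than one message), the $(\varepsilon,\delta)$-LDP guarantee holds for every user's interaction with the server. This is precisely the LDP guarantee claimed for Algorithm \ref{algorithm:reduction bandit}.

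The main subtlety to address carefully is why no composition across rounds is needed: each user interacts with the server exactly once and reveals only a single noised loss value, so the privacy budget is spent on one Gaussian release per user rather than accumulated over the horizon $T$. I would emphasize that $x_t$ carries no private information about user $t$ (it depends only on earlier users' already-privatized feedback), which is what lets us treat it as fixed when applying the mechanism and avoids any leakage through the choice of action.
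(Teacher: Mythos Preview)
Your proposal is correct and follows essentially the same approach as the paper: bound the sensitivity of the transmitted scalar $f_t(x_t)$ by $2B$ using the assumption $|f_t(x)|\leqslant B$, then invoke the Gaussian Mechanism with the matching noise scale $\sigma = \frac{2B\sqrt{2\ln(1.25/\delta)}}{\varepsilon}$. The paper's proof is a one-liner to this effect; your additional remarks about $x_t$ being fixed (server-chosen) and about no composition being needed are sound elaborations but not required for the argument.
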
 

To show the power of Algorithm \ref{algorithm:reduction bandit}, here we consider its main application, Bandits Convex Optimization. For another two concrete applications, MAB and BAI, see Appendix \ref{appendix: examples} for more details. Besides, it also looks promising to extend the technique to pure exploration in combinatorial bandits (e.g., \cite{huang2018combinatorial}).

In bandit convex optimization \cite{hazan2016introduction}, $\mc{X}$ is a bounded convex constraint set. At each round, the server chooses a prediction $x_t$ based on previous collected information, then suffers and observers a loss value  $f_t(x_t)$. The goal is to design an algorithm with low regret defined as $\max_{x\in\mc{X}}\mb{E}[\sum_{t=1}^T f_t(x_t) - f_t(x)]$. There are two different environments which generate underlying loss function sequence $\{f_t(x)| t\in [T]\}$. For adversarial BCO, there is no further assumption about $\{f_t(x)|t\in[T]\}$ and they are fixed functions given before games starts. For stochastic BCO \cite{agarwal2011stochastic}, feedback $f_t(x_t)$ is generated as $f(x_t) + q_t$, where $f(x)$ is an unknown convex function and $\{q_t\}$ are independently and identically distributed noise sampled from a sub-Gaussian distribution $\mc{Q}$ with mean $0$.  

A critical ingredient in BCO is the gradient estimator constructed through the observed feedback. Besides convexity, when $f_t$ have additional properties like smoothness or strong convexity, usually we need to construct different gradient estimators and use different efficient non-private algorithms $\mc{A}$ to achieve better performance \cite{flaxman2005online,agarwal2010optimal,saha2011improved,hazan2014bandit}. Denote $u_t$ as a uniform random vector sampled from the unit sphere, then two representatives of gradient estimators are sphere sampling estimator $\frac{d}{\rho}f_t(x_t)u_t$ used in \cite{flaxman2005online,agarwal2010optimal} ($\rho$ is a parameter), and advanced ellipsoidal sampling estimator $d f_t(x_t)A_t^{-1}u_t$ which is the key part in \cite{saha2011improved,hazan2014bandit} to further improve the performance, where $A_t$ is the Hessian matrix induced by 
certain loss function with self-concordant barrier.

When it comes to private setting, \citet{thakurta2013nearly} designed a delicate \textit{differentially private} algorithm with $\tilde{\mc{O}}\left(T^{3/4}/\varepsilon\right)$ and $\tilde{\mc{O}}\left(T^{2/3}/\varepsilon\right)$ guarantees for convex and strongly convex loss functions respectively, based on classical sphere sampling estimator and tree-based aggregation technique \cite{dwork2010differential}. To achieve better bounds under additional smoothness assumption, it seems natural to combine their method with advanced ellipsoidal sampling estimator. However, this approach doesn't work even under DP guarantee, let alone LDP guarantee. In detail, to protect privacy, usually we need to add noise proportional to the range of information we use. For classical sphere sampling estimator, it is bounded by $dB/\rho$. However, for the advanced ellipsoidal sampling estimator, the spectral norm of inverse Hessian of self-concordant barrier (i.e., $A_t^{-1}$) can be unbounded, which makes it hard to protect privacy. Besides, tree-based aggregation techniques fail in LDP setting.

Instead of adding noise to the accumulated estimated gradient like \citet{thakurta2013nearly}, our general reduction Algorithm \ref{algorithm:reduction bandit} injects noise directly to the loss value that is already bounded. Based on the critical observation that the regret defined for original loss functions $\{f_t(x) | t \in [T]\}$ equals to the regret defined for virtual loss functions $\{f_t(x) + Z_t | t \in [T]\}$ in expectation, we avoid complex analysis which is based on a connection with non-private solutions \cite{thakurta2013nearly}, and obtain the utility of our private algorithm through the guarantee of non-private algorithm $\mc{A}$ directly as the following shows: 

\begin{theorem}
\label{theorem: BCO utility}
Suppose non-private algorithm $\mc{A}$ achieves regret $B\cdot \Reg^T_{\mc{A}}$ for BCO, where $B$ is the range of loss function. We have the following guarantee for Algorithm \ref{algorithm:reduction bandit}: for any $x\in\mc{X}$, there is
\begin{equation}
    \mb{E}\left[\sum_{t=1}^T f_t(x_t) - f_t(x)\right] \leqslant \tilde{\mc{O}}\left(\frac{B\ln (T/\delta)}{\varepsilon}\cdot\Reg^T_{\mc{A}}\right)
\end{equation}
where expectation is taken over the randomness of non-private algorithm $\mc{A}$ and all injected noise.\footnote{Actually, if using the high probability guarantee of black-box algorithm $\mc{A}$, we can also obtain corresponding high probability guarantee of our Algorithm \ref{algorithm:reduction bandit}. See Appendix \ref{appendix:proofs in section 3} for more details, and the same argument there can be extended to results in section \ref{section:multi-point feedback} as well.}
\end{theorem}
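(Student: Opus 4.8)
The plan is to exploit a simple but crucial observation: injecting the data-independent noise $Z_t$ shifts each loss by a constant, and a constant shift leaves the regret invariant while only inflating the effective range of values seen by the black-box algorithm $\mc{A}$. Concretely, I would define the virtual loss functions $g_t(x) \defeq f_t(x) + Z_t$. Since $Z_t$ does not depend on $x$, each $g_t$ inherits convexity (and, where assumed, strong convexity and smoothness, since a constant shift leaves gradients and Hessians unchanged) from $f_t$, so $\{g_t\}$ is a legitimate input sequence for $\mc{A}$. Moreover, the feedback $\mc{A}$ actually receives in Algorithm \ref{algorithm:reduction bandit} is exactly $f_t(x_t)+Z_t = g_t(x_t)$, and $\mc{A}$ observes only function values; hence from the viewpoint of $\mc{A}$ the entire interaction is an ordinary BCO game played against $\{g_t\}$.

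The first key step is the cancellation identity: for any fixed comparator $x\in\mc{X}$,
\[
\sum_{t=1}^T \big(g_t(x_t) - g_t(x)\big) = \sum_{t=1}^T \big(f_t(x_t) - f_t(x)\big),
\]
because the additive terms $Z_t$ cancel pointwise. Thus the regret of $\mc{A}$ on the virtual sequence equals, deterministically, the regret of our algorithm on the true sequence; this is the ``regret is preserved'' remark in the text, and it in fact holds identically, not merely in expectation.

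The second step is to apply $\mc{A}$'s guarantee conditionally on the noise. Fixing a realization $Z=(Z_1,\dots,Z_T)$, the sequence $\{g_t\}$ becomes a deterministic oblivious sequence of convex functions with range $B'(Z)\defeq\max_{t}\sup_{x\in\mc{X}}|g_t(x)|\le B+\max_{t\in[T]}|Z_t|$. By hypothesis $\mc{A}$ attains regret at most $(\text{range})\cdot\Reg^T_{\mc{A}}$, so $\mb{E}_{\mc{A}}\big[\sum_t g_t(x_t)-g_t(x)\mid Z\big]\le B'(Z)\,\Reg^T_{\mc{A}}$. Combining with the cancellation identity and taking the outer expectation over $Z$,
\[
\mb{E}\Big[\sum_{t=1}^T f_t(x_t)-f_t(x)\Big] \le \mb{E}_Z\big[B'(Z)\big]\cdot\Reg^T_{\mc{A}} \le \Big(B+\mb{E}\big[\max_{t\in[T]}|Z_t|\big]\Big)\,\Reg^T_{\mc{A}}.
\]
A standard Gaussian maximal inequality then gives $\mb{E}[\max_{t\in[T]}|Z_t|]\le\sigma\sqrt{2\ln(2T)}$, and substituting $\sigma=\frac{2B\sqrt{2\ln(1.25/\delta)}}{\varepsilon}$ yields $\mb{E}_Z[B'(Z)]=\tilde{\mc{O}}\big(\frac{B\ln(T/\delta)}{\varepsilon}\big)$, which is the claimed bound.

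The step I expect to require the most care is the conditional application of $\mc{A}$'s regret bound. One must check (i) that $\mc{A}$'s guarantee is genuinely range-homogeneous, i.e.\ scales linearly with the loss range exactly as the hypothesis states; and (ii) that conditioning on $Z$ legitimately turns $\{g_t\}$ into a fixed oblivious sequence, so that no adaptivity is introduced by the fact that $\mc{A}$'s later plays depend on earlier noise. The conditioning argument resolves (ii) cleanly, since once $Z$ is frozen the only remaining randomness is $\mc{A}$'s own. A secondary subtlety arises only in the stochastic BCO setting, where the feedback already contains intrinsic noise $q_t$: there one simply observes that $q_t+Z_t$ remains mean-zero and sub-Gaussian with parameter $\tilde{\mc{O}}(\sigma)$, so $\mc{A}$'s stochastic guarantee still applies and the final rate is unchanged.
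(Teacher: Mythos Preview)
Your proof is correct and follows the same route as the paper: define the shifted virtual losses $g_t=f_t+Z_t$, observe the pointwise cancellation $\sum_t(g_t(x_t)-g_t(x))=\sum_t(f_t(x_t)-f_t(x))$, condition on the noise so that $\{g_t\}$ is an oblivious sequence, and invoke $\mc{A}$'s range-scaled regret bound. The only difference is cosmetic: the paper truncates on the high-probability event $\{\max_t|Z_t|\le\sigma\sqrt{2\ln(2T^2)}\}$ and bounds the complementary event trivially via $\Pr[F]\cdot 2BT\le 2B$, so that $\mc{A}$ need only be tuned for one fixed, known-in-advance range, whereas you integrate $\mb{E}[\max_t|Z_t|]$ directly---slightly cleaner, but it tacitly assumes $\mc{A}$'s guarantee scales with whatever range happens to be realized rather than a pre-specified one.
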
 

With above theorem, by plugging different non-private optimal algorithms under variant cases, we obtain corresponding regret bounds with LDP guarantee:

\begin{corollary}
\label{cor: concrete BCO utility}

When loss functions are convex and $\beta$-smooth, Algorithm \ref{algorithm:reduction bandit} achieves $\tilde{\mc{O}}(T^{2/3}/\varepsilon)$ regret by setting $\mc{A}$ as Algorithm 1 in \cite{saha2011improved}. When loss functions are $\mu$-strongly convex and $\beta$-smooth, Algorithm \ref{algorithm:reduction bandit} achieves $\tilde{\mc{O}}(\sqrt{T}/\varepsilon)$ regret by setting $\mc{A}$ as Algorithm 1 in \cite{hazan2014bandit}. For private Stochastic BCO, using Algorithm 2 in \cite{agarwal2011stochastic} as the black-box algorithm will achieve  $\tilde{\mc{O}}(\sqrt{T}/\varepsilon)$ regret.
\end{corollary}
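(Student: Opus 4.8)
The plan is to derive all three bounds as direct instantiations of Theorem~\ref{theorem: BCO utility}, whose statement already reduces the utility analysis of Algorithm~\ref{algorithm:reduction bandit} to the normalized non-private regret $\Reg^T_{\mc{A}}$ of the black-box routine. Thus for each regime the work reduces to two checks: (i) that the chosen $\mc{A}$ is legitimately applicable to the (perturbed) loss sequence it actually observes, and (ii) the value of $\Reg^T_{\mc{A}}$ for that $\mc{A}$ under the stated assumptions.

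First I would record the structural fact that makes the reduction transparent. The virtual loss $g_t(x) := f_t(x) + Z_t$ fed to $\mc{A}$ differs from $f_t$ only by the additive constant $Z_t$, which is independent of $x$; hence $\nabla g_t = \nabla f_t$ and $\nabla^2 g_t = \nabla^2 f_t$. Consequently $g_t$ inherits convexity, $\beta$-smoothness, and $\mu$-strong convexity verbatim from $f_t$, so the structural hypotheses demanded by each black-box algorithm continue to hold on the sequence on which it is run. This is also what underlies the observation, used in Theorem~\ref{theorem: BCO utility}, that the regret against $\{g_t\}$ equals the regret against $\{f_t\}$.

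Second, I would substitute the known non-private rates into Theorem~\ref{theorem: BCO utility}. For convex and $\beta$-smooth losses, Algorithm~1 of \cite{saha2011improved} gives $\Reg^T_{\mc{A}} = \tilde{\mc{O}}(T^{2/3})$, yielding $\tilde{\mc{O}}(T^{2/3}/\varepsilon)$; for $\mu$-strongly convex and $\beta$-smooth losses, Algorithm~1 of \cite{hazan2014bandit} gives $\Reg^T_{\mc{A}} = \tilde{\mc{O}}(\sqrt{T})$, yielding $\tilde{\mc{O}}(\sqrt{T}/\varepsilon)$; and for stochastic BCO, Algorithm~2 of \cite{agarwal2011stochastic} gives $\Reg^T_{\mc{A}} = \tilde{\mc{O}}(\sqrt{T})$, yielding $\tilde{\mc{O}}(\sqrt{T}/\varepsilon)$. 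In each case the extra $\tilde{\mc{O}}(\ln(T/\delta)/\varepsilon)$ factor is exactly the price of privacy already isolated by the theorem.

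The main obstacle I expect lies in the applicability checks rather than in the rate arithmetic. For the stochastic case one must verify that injecting independent Gaussian noise $Z_t$ on top of the stochastic feedback $f(x_t)+q_t$ keeps the observation of the form ``signal plus mean-zero sub-Gaussian noise'': since the sum $q_t + Z_t$ of independent sub-Gaussian variables is again sub-Gaussian (with an inflated parameter), Algorithm~2 of \cite{agarwal2011stochastic} still applies, and its rate depends on this parameter only through factors absorbed by $\tilde{\mc{O}}$. A related subtlety is that the effective range of the observed values grows from $B$ to $\tilde{\mc{O}}(B/\varepsilon)$ once the noise is added, so one must confirm that each cited regret bound genuinely scales linearly in this range, i.e.\ is of the claimed form $B\cdot\Reg^T_{\mc{A}}$ with $\Reg^T_{\mc{A}}$ range-free; this linear dependence is precisely what lets the range inflation be folded into the single $\tilde{\mc{O}}(\ln(T/\delta)/\varepsilon)$ factor of Theorem~\ref{theorem: BCO utility}.
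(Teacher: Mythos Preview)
Your proposal is correct and follows essentially the same route as the paper: invoke Theorem~\ref{theorem: BCO utility} and plug in the non-private rates from \cite{saha2011improved,hazan2014bandit}, and for the stochastic case observe that the Gaussian perturbation merely fattens the feedback noise to the sub-Gaussian $q_t+Z_t$ (the paper phrases this as running $\mc{A}$ under the convolved noise law $\mc{Q}\otimes\mc{N}(0,\sigma^2)$). Your additional remarks---that adding a constant preserves convexity/smoothness/strong convexity, and that the range inflation is already absorbed by the $B\cdot\Reg^T_{\mc{A}}$ scaling in Theorem~\ref{theorem: BCO utility}---are the natural sanity checks the paper leaves implicit.
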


Note this result improves previous result \cite{thakurta2013nearly} in three aspects. First, our Algorithm \ref{algorithm:reduction bandit} guarantees stronger LDP rather than DP. Second, it achieves better regret bounds when loss functions are further smooth, and matches corresponding non-private results. Third, our algorithm is easy to be implemented, admits a unified analysis, and also obtains new results in stochastic BCO.  


\subsection{Private Bandits Convex Optimization with Multi-Point Feedback}
\label{section:multi-point feedback}
Now we consider BCO with Multi-Point Feedback. Different with one-point bandit feedback setting, where we can only query one point at each round, now we can query multiple points. This is natural in many applications, such as in personalized recommendation, we can recommend multiple items to each user and receive their feedback. Suppose we are permitted to query $K$ points per round (denote them as $x_{t,1}, \dots, x_{t,K}$ at round $t$), then we observe $f_t(x_{t,1}), \dots, f_t(x_{t, K})$. Suppose decision set $\mc{X}$ satisfies $r\mb{B} \subset \mc{X} \subset R\mb{B}$ like in \citet{agarwal2010optimal}, where $\mb{B}$ is the unit ball in $\mb{R}^d$. The expected regret is defined as  
\begin{align}
    \mb{E}\left[\frac{1}{K}\sum_{t=1}^T\sum_{k=1}^K f_t(x_{t,k})\right] - \min_{x\in \mc{X}} \mb{E}\left[\sum_{t=1}^T f_t(x)\right]
\end{align}
where $\{f_t(x)\}$ are $G$-Lipschitz convex functions, and expectation is taken over the randomness of algorithm. 

With the relaxation of amount about queries, there is a significant difference about regret bound of BCO between one-point feedback and $K$-point feedback for $K \geqslant 2$ \cite{agarwal2010optimal}. In detail, the minimax regret for general BCO with one-point feedback is in order $\tilde{\mc{O}}(\sqrt{T})$ (even for strongly convex and smooth losses \cite{shamir2013complexity}), whereas one can design algorithms for BCO under multi-point feedback with $\mc{O}(\sqrt{T})$ regret for convex loss and $\mc{O}(\log T)$ regret for strongly convex loss, just like full information online convex optimization. As there is not much difference between $K=2$ and $K>2$, so we focus on $K=2$ in this paper. An optimal non-private algorithm can be found in \cite{agarwal2010optimal} and is given as Algorithm \ref{algorithm:non-private two point bandit} in Appendix \ref{appendix: non-private MP-BCO} for completion, which will be used as our black-box algorithm later.


For private version of this problem, note our previous reduction framework no longer fits in this new setting, mainly because of multiple feedback. If we add the same noise $Z_t$ to observed values $f_t(x_{t,1}), f_t(x_{t,2})$, then it cannot guarantee privacy. If we use different noise $Z_{t,1}, Z_{t,2}$ to perturb observed values respectively, though it protects privacy, previous utility analysis fails. 

Based on the non-private algorithm, we design a slightly modified reduction framework that resembles the approach in \citet{agarwal2017price} but for Multi-Point BCO, as shown in Algorithm \ref{algorithm:reduction two point bandit}. The key observation is that now we play two pretty close points $x_{t,1}, x_{t,2}$ at each round, and critical information we use about user $t$ is only the difference $f_t(x_{t,1}) - f_t(x_{t,2})$ of two observed values. Note $x_{t,1} - x_{t,2} = 2\rho u_t$ (see Algorithm \ref{algorithm:non-private two point bandit} in Appendix \ref{appendix: non-private MP-BCO}), which implies we can add noise $n_t^\top (x_{t,1} - x_{t,2})$ to $f_t(x_{t,1}) - f_t(x_{t,2})$ to protect its privacy. As $f_t(x)$ is $G$-Lipschitz, hence $\abs{f_t(x_{t,1}) - f_t(x_{t,2})} \leqslant 2\rho G \norm{u_t}_2$ and adding Gaussian noise with standard deviation $\sigma = \frac{2 G\sqrt{2\ln(1.25/\delta)}}{\varepsilon}$  is enough to protect privacy as $\norm{u_t}_2 = 1$. 

\begin{algorithm}[t!]
 \caption{Two-Point Feedback Private Bandit Convex Optimization via Black-box Reduction}
\label{algorithm:reduction two point bandit}
	 \textbf{Input}: set $\mc{A}$ as Algorithm  \ref{algorithm:non-private two point bandit} (in Appendix \ref{appendix: non-private MP-BCO}) with parameters $\eta, \rho, \xi$, privacy parameters $\varepsilon, \delta$ \\
	 \textbf{Initialize:} set $\sigma = \frac{ 2G\sqrt{2\ln(1.25/\delta)}}{\varepsilon}$, $\eta = \frac{1}{\sqrt{T}}, \rho = \frac{\log T}{T}, \xi = \frac{\rho}{r}$ \\
	\For{$t=1,2, \ldots$}{
	 Server plays $x_{t,1}, x_{t,2} \in \mc{X}$ received from $\mc{A}$\\
 	 User suffers $f_t(x_{t,1}), f_t(x_{t,2})$ and passes $f_t(x_{t,1}) - f_t(x_{t,2}) + n_t^\top (x_{t,1} - x_{t,2})$ to $\mc{A}$ in the server, where $n_t \sim \mc{N}(0, \sigma^2 \mr{I}_d)$ \\
    }
\end{algorithm}

\begin{theorem}
\label{two point privacy}
Algorithm \ref{algorithm:reduction two point bandit} guarantees $(\varepsilon, \delta)$-LDP. 
\end{theorem}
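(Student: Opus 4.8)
The plan is to reduce the privacy of Algorithm \ref{algorithm:reduction two point bandit} to the one-dimensional Gaussian Mechanism stated in Section \ref{sec:prel}. First I would note that Local Differential Privacy is a per-user condition and that each user $t$ interacts with the protocol exactly once, contributing the single scalar report $v_t = f_t(x_{t,1}) - f_t(x_{t,2}) + n_t^\top(x_{t,1} - x_{t,2})$. Since the query points $x_{t,1}, x_{t,2}$ returned by $\mc{A}$ are computed from the previous (already-released) reports together with $\mc{A}$'s internal randomness, they do not depend on user $t$'s loss function $f_t$; hence it suffices to show that, conditioned on any fixed pair $x_{t,1}, x_{t,2}$, the map $f_t \mapsto v_t$ satisfies $(\varepsilon,\delta)$-LDP, and then average over the data-independent randomness generating these points.

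Next I would rewrite the injected noise as an effective scalar Gaussian. By the non-private routine (Algorithm \ref{algorithm:non-private two point bandit}) the two queried points satisfy $x_{t,1} - x_{t,2} = 2\rho u_t$ with $\norm{u_t}_2 = 1$, so $n_t^\top(x_{t,1}-x_{t,2}) \sim \mc{N}(0, \sigma^2 \norm{x_{t,1}-x_{t,2}}_2^2) = \mc{N}(0, (2\rho)^2 \sigma^2)$. Writing the report as $v_t = g(f_t) + 2\rho W$ with $g(f_t) := f_t(x_{t,1}) - f_t(x_{t,2})$ and $W \sim \mc{N}(0,\sigma^2)$, and using that dividing by the constant $2\rho$ is a post-processing operation, it is equivalent to prove that $h(f_t) + W$ is $(\varepsilon,\delta)$-LDP, where $h(f_t) := g(f_t)/(2\rho)$.

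Finally I would verify that this is exactly the Gaussian Mechanism with the prescribed $\sigma$. Using that $f_t$ is $G$-Lipschitz and $\norm{x_{t,1}-x_{t,2}}_2 = 2\rho$, we get $\abs{g(f_t)} \leqslant 2\rho G$, hence $\abs{h(f_t)} \leqslant G$ and the sensitivity $\Delta := \max_{f,f'} \abs{h(f) - h(f')} \leqslant 2G$. The standard deviation chosen in the algorithm, $\sigma = \frac{2G\sqrt{2\ln(1.25/\delta)}}{\varepsilon}$, therefore equals $\frac{\Delta\sqrt{2\ln(1.25/\delta)}}{\varepsilon}$, so the Gaussian Mechanism guarantee from Section \ref{sec:prel} yields $(\varepsilon,\delta)$-LDP of $f_t \mapsto h(f_t)+W$ for every fixed $x_{t,1}, x_{t,2}$; averaging over their data-independent distribution preserves the bound. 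The one point that needs care -- and the only real obstacle -- is the reduction step: one must check that projecting the $d$-dimensional vector $n_t$ onto $x_{t,1}-x_{t,2}$ produces precisely a one-dimensional Gaussian whose scale $2\rho\sigma$ matches the sensitivity $4\rho G$ of the raw difference $g$, and that the scale $2\rho$ (equivalently the direction $u_t$) is chosen by $\mc{A}$ independently of $f_t$ so that the conditioning argument is valid; releasing only the scalar difference rather than the two values $f_t(x_{t,1}), f_t(x_{t,2})$ separately is exactly what lets this cheaper noise suffice.
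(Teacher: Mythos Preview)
Your proposal is correct and follows essentially the same argument as the paper: bound $\abs{f_t(x_{t,1})-f_t(x_{t,2})}\leqslant 2\rho G$ via Lipschitzness, observe that $n_t^\top(x_{t,1}-x_{t,2})=2\rho\, n_t^\top u_t\sim\mc{N}(0,4\rho^2\sigma^2)$, and invoke the Gaussian Mechanism. The paper's proof is just the terse version of yours; your explicit conditioning on the data-independent $(x_{t,1},x_{t,2})$ and the rescaling by $2\rho$ to match the stated mechanism are useful clarifications but not a different route.
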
 

For utility analysis of Algorithm \ref{algorithm:reduction two point bandit}, as now the noise depends on strategies $x_{t,1}, x_{t,2}$ at round $t$, hence both output and regret in terms of original loss functions $\{f_t(x) | t \in [T]\}$ are the same as output and regret in terms of virtual loss functions $\{f_t(x) + n_t^\top x | t \in [T]\}$ in expectation. Therefore we can obtain the utility of our private Algorithm \ref{algorithm:reduction two point bandit} through the guarantee of non-private algorithm $\mc{A}$:

\begin{theorem}
\label{theorem: two point utility}
For any $x \in \mc{X}$, Algorithm \ref{algorithm:reduction two point bandit} guarantees 
\begin{equation}
    \mb{E}\left[\frac{1}{2}\sum_{t=1}^T \left(f_t(x_{t,1}) + f_t(x_{t,2})\right) - f_t(x)\right] \leqslant \tilde{\mc{O}}\left(\frac{d^3\sqrt{T}}{\varepsilon^2}\right)
\end{equation}
If $\{f_t\}$ are further $\mu$ strongly convex, set $\eta = \frac{1}{\mu t}, \rho = \frac{\log T}{T}, \xi = \frac{\rho}{r}$, then for any $x \in \mc{X}$, we have 
\begin{equation}
    \mb{E}\left[\frac{1}{2}\sum_{t=1}^T \left(f_t(x_{t,1}) + f_t(x_{t,2})\right) - f_t(x)\right] \leqslant \tilde{\mc{O}}\left(\frac{d^3\log T}{\mu\varepsilon^2}\right)
\end{equation} 
\end{theorem}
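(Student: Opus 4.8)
The plan is to show that Algorithm \ref{algorithm:reduction two point bandit} is, up to the randomness of the injected noise, exactly the non-private Algorithm \ref{algorithm:non-private two point bandit} run on a sequence of \emph{virtual loss functions}, and then to transfer the non-private regret guarantee. The first step is the identity
\[
 f_t(x_{t,1}) - f_t(x_{t,2}) + n_t^\top(x_{t,1}-x_{t,2}) = g_t(x_{t,1}) - g_t(x_{t,2}), \quad g_t(x) := f_t(x) + n_t^\top x,
\]
which shows that the scalar fed to $\mc{A}$ is precisely the two-point difference of $g_t$. Hence $\mc{A}$ behaves as though it were solving the BCO instance $\{g_t\}$. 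Since adding the linear term $n_t^\top x$ preserves both convexity and $\mu$-strong convexity (and commutes with the ball-smoothing used inside $\mc{A}$), each $g_t$ is a valid input for $\mc{A}$; the only quantity that changes is the Lipschitz constant, which becomes the random value $G + \norm{n_t}_2$.

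Next I would pass to the regret measured against $\{g_t\}$. Writing $g_t(x_{t,k}) = f_t(x_{t,k}) + n_t^\top x_{t,k}$ and $g_t(x) = f_t(x) + n_t^\top x$, the two regrets differ by $\sum_t\big[\tfrac12 n_t^\top(x_{t,1}+x_{t,2}) - n_t^\top x\big]$. The points $x_{t,1},x_{t,2}$ are produced by $\mc{A}$ from the feedback of rounds $1,\dots,t-1$ together with the fresh sampling direction $u_t$, all of which are independent of $n_t$; since $n_t$ is zero-mean and $x$ is a fixed comparator, each term vanishes in expectation. Thus the regret of Algorithm \ref{algorithm:reduction two point bandit} against $\{f_t\}$ equals the expected regret of $\mc{A}$ against $\{g_t\}$, and it suffices to bound the latter.

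Because the effective Lipschitz constant is now random, I cannot invoke the worst-case regret bound of $\mc{A}$ as a black box; instead I would open up its online-gradient-descent core, whose regret is controlled by $\tfrac{D^2}{2\eta} + \tfrac{\eta}{2}\sum_t \norm{\hat g_t}_2^2$ for the step size $\eta=1/\sqrt{T}$ (and by $\sum_t \tfrac{1}{2\mu t}\norm{\hat g_t}_2^2$ in the strongly convex case with $\eta_t=1/(\mu t)$), where $D$ is the diameter of $\mc{X}$ and $\hat g_t = \tfrac{d}{2\rho}\big(g_t(x_{t,1})-g_t(x_{t,2})\big)u_t$ is the two-point estimator. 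The key calculation is the bound $\norm{\hat g_t}_2 \leqslant d\,(G+\norm{n_t}_2)$, which uses $\norm{x_{t,1}-x_{t,2}}_2 = 2\rho$ and $\norm{u_t}_2=1$, giving
\[
 \mb{E}\big[\norm{\hat g_t}_2^2\big] \leqslant d^2\,\mb{E}\big[(G+\norm{n_t}_2)^2\big] = \tilde{\mc{O}}\!\left(\frac{d^3}{\varepsilon^2}\right),
\]
since $\mb{E}[\norm{n_t}_2^2] = \sigma^2 d$ and $\sigma = \Theta\big(G\sqrt{\ln(1/\delta)}/\varepsilon\big)$ from the initialization. Plugging this into the two regret expressions yields $\tilde{\mc{O}}(d^3\sqrt{T}/\varepsilon^2)$ and $\tilde{\mc{O}}(d^3\log T/(\mu\varepsilon^2))$ respectively, after absorbing the negligible ball-smoothing bias ($\rho = \log T/T$) into lower-order terms.

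I expect the main obstacle to be this third step: correctly threading the random Lipschitz constant through the internal analysis of $\mc{A}$ rather than treating it as a black box, and in particular verifying that $\mb{E}[\norm{n_t}_2^2]=\sigma^2 d$ is the only place the dimension-and-privacy blow-up enters, so that the extra factor $d/\varepsilon^2$ combines with the intrinsic $d^2$ of the two-point estimator to produce exactly the claimed $d^3/\varepsilon^2$ scaling.
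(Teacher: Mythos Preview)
Your proposal is correct and arrives at the stated bound, but it takes a somewhat different route from the paper. The paper keeps $\mc{A}$ as a genuine black box: it conditions on a high-probability event that the Gaussian perturbations are uniformly small across all rounds, so that the virtual losses $\tilde f_t(x)=f_t(x)+n_t^\top x$ carry a single deterministic Lipschitz bound (of order $G+\sigma\sqrt{d}$). On that event the published regret guarantee of the non-private two-point algorithm from \cite{agarwal2010optimal} applies verbatim with the inflated Lipschitz constant, and the complementary bad event costs only $O(B)$ in expectation since it has probability $\leq 1/T$. The identity between the $\tilde f_t$-regret and the $f_t$-regret in expectation is used exactly as you propose.

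You instead white-box the OGD core of $\mc{A}$ and compute $\mb{E}\big[\norm{\hat g_t}_2^2\big]$ directly via $\mb{E}\big[\norm{n_t}_2^2\big]=\sigma^2 d$, avoiding any conditioning. This is arguably cleaner---no tail bounds, no bad-event bookkeeping---and makes the origin of the extra $d/\varepsilon^2$ factor completely explicit. The price is that you give up the pure-reduction character the paper emphasizes: your argument would need to be redone if $\mc{A}$ were swapped for a different two-point method, whereas the paper's step of ``fix noise, cite the black-box regret with the new Lipschitz constant'' would not. Both routes land on the same $d^3/\varepsilon^2$ scaling because both ultimately charge the full Lipschitz constant $G+\norm{n_t}_2$ rather than the tighter directional quantity $G+|n_t^\top u_t|$.
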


From above results, one can see there is also a significant difference about regret bounds between BCO and Multi-Point BCO under LDP setting, which is exactly the same as non-private settings.  

\section{Contextual Bandits Learning with LDP Guarantee}
\label{sec: private contextual bandits}
In this section, we turn our attention to more practical contextual bandits learning. At each round $t$, the learner needs to choose an action $x_t \in \mc{X}_t$ in the local side, where $\mc{X}_t$ contains the personal information and features about underlying arms. Then the user generates a reward which is assumed to be $y_t = g(x_t^\top \theta^*)+\eta_t$, where $\theta^*$ is an unknown true parameter in the domain $\mc{W}$ , $g:\mb{R}\rightarrow \mb{R}$ is a known function, and $\eta_t$ is a random noise in $[-1,1]$ with mean 0 \footnote{It's not hard to relax this constraint to a sub-Gaussian noise.}. If we know $\theta^*$, $x_{t,*}:=\argmax_{x\in \mc{X}_t} g(x^\top \theta^*)$ is apparently the optimal choice at round $t$. For an algorithm $\mc{A}$, we define its regret over $T$ rounds as $\mr{Reg}_T^{\mc{A}}:= \sum_{t=1}^T g(x_{t,*}^\top \theta^*) - g(x_t^\top \theta^*)$, where $\{x_t, t\in [T]\}$ is the output of $\mc{A}$. We omit the superscript $\mc{A}$ when it is clear. There are two critical parts in contextual bandits. One is to estimate $\theta^*$, and corresponding estimated parameter is used to find best action for exploitation. Another one is to construct certain term for the purpose of exploration, since we are in the environment of partial feedback. Throughout this section, we assume both $\{\mc{X}_t\}$ and $\mc{W}$ are bounded by a $d$-dimensional $L_2$ ball with radius $1$ for simplicity.  

Compared with private context-free bandits, private contextual bandits learning is more difficult, not only because of relatively complicated setting, but we need to protect more information including both contexts and rewards, which causes additional difficulty in the analysis of regret. As a warm-up, we show how to design algorithm with LDP guarantee for contextual linear bandits, which resembles a recent work \cite{shariff2018differentially} but under a relaxed version of DP. Next, we propose a more complicated algorithm for generalized linear bandits with LDP guarantee.

\subsection{Warm-Up: LDP Contextual Linear Bandits} 
In contextual linear bandits, mapping $g$ is an identity, or equivalently, the reward generated by user $t$ for action $x_t$ is $y_t = x_t^\top \theta^* + \eta_t$. To estimate $\theta^*$, the straightforward method is to use linear regression based on collected data. Combined with classic principal for exploration, optimism in the face of uncertainty, it leads to LinUCB \cite{abbasi2011improved}, which is nearly optimal for contextual linear bandits. To protect privacy, it's not surprising that we adopt the same technique as LDP linear regression \cite{smith2017interaction}, i.e. injecting noise to $x_t x_t^\top$ and $y_t x_t$ collected from user $t$. However, the injected noise have influence not only over the parameter estimation, but also for further exploration part, due to more complex bandit model, thus we need to set parameters more carefully. See Algorithm \ref{algorithm: LDP linear bandit} in Appendix \ref{appendix: LDP linear bandit}.

Now, we state the theoretical guarantee of Algorithm \ref{algorithm: LDP linear bandit}.

\begin{theorem}
\label{theorem: linear bandit privacy}
Algorithm \ref{algorithm: LDP linear bandit} guarantees $(\varepsilon, \delta)$-LDP. 
\end{theorem}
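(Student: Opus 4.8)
The plan is to reduce the claim to the basic Gaussian Mechanism stated in Section~\ref{sec:prel}, together with a simple composition over the two statistics that each user transmits. By the description preceding the theorem, the only information user $t$ releases is a perturbed version of the two sufficient statistics $x_t x_t^\top$ and $y_t x_t$ used by the LinUCB-style update; everything the server subsequently computes (the estimate of $\theta^*$, the confidence ellipsoid, and the next action $x_{t+1}$) is a post-processing of these noisy releases. Hence by the post-processing invariance of LDP it suffices to show that the pair of noisy statistics leaving each user is $(\varepsilon,\delta)$-LDP.

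First I would bound the $L_2$-sensitivities of the two maps over the data domain $\mc{C}$ of $(x,y)$ pairs. Using the assumption that each context lies in the unit ball, $\norm{x}_2\le 1$, so $\norm{xx^\top}_F=\norm{x}_2^2\le 1$ and therefore $\max_{x,x'}\norm{xx^\top-x'x'^\top}_F\le 2$. Since $\norm{\theta^*}_2\le 1$ and $\eta_t\in[-1,1]$, we get $\abs{y_t}=\abs{x_t^\top\theta^*+\eta_t}\le 2$, so $\norm{y_t x_t}_2\le 2$ and $\max\norm{yx-y'x'}_2\le 4$. These two bounds are precisely the sensitivities $\Delta$ that should calibrate the Gaussian noise added to $x_tx_t^\top$ and to $y_tx_t$ in Algorithm~\ref{algorithm: LDP linear bandit}.

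Next I would apply the Gaussian Mechanism to each statistic separately: releasing $x_tx_t^\top$ with noise calibrated to sensitivity $2$ and budget $(\varepsilon/2,\delta/2)$ is $(\varepsilon/2,\delta/2)$-LDP, and likewise releasing $y_tx_t$ with sensitivity $4$. Equivalently, one may regard the concatenation $h(x_t,y_t)=(\mathrm{vec}(x_tx_t^\top),\,y_tx_t)$ as a single vector-valued query with sensitivity $\sqrt{2^2+4^2}$ and invoke one Gaussian Mechanism; I would present whichever form matches the exact noise calibration in the appendix. Combining the two releases by the basic composition theorem for $(\varepsilon,\delta)$-DP then yields an $(\varepsilon,\delta)$-LDP guarantee for the entire message of user $t$. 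Because each user injects independent fresh noise and the local randomizer acts on that single user's data alone, this per-user guarantee is exactly the definition of local differential privacy; crucially, no cross-round composition is required, in contrast to the tree-based aggregation needed under DP.

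The step I expect to require the most care is the sensitivity accounting for the matrix statistic: I must ensure the noise is added so as to keep the perturbation of $x_tx_t^\top$ symmetric (e.g.\ perturbing only the upper triangle and reflecting) while still calibrating to the Frobenius-norm sensitivity, and I must confirm that the budget split between the two statistics matches the constants actually chosen in Algorithm~\ref{algorithm: LDP linear bandit}, so that the advertised $(\varepsilon,\delta)$ is recovered exactly rather than merely up to constant factors.
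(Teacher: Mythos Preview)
Your approach is essentially the same as the paper's: bound the $L_2$-sensitivities of the two released statistics, invoke the Gaussian Mechanism on each, and compose. The paper's proof is a single sentence noting $\norm{x_t}\le 1$, $y_t\in[-2,2]$, and asserting that the matrix release is $(\varepsilon/3,\delta/2)$-LDP while the vector release is $(2\varepsilon/3,\delta/2)$-LDP.

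The one concrete correction to your write-up is the budget split: it is \emph{not} even. Algorithm~\ref{algorithm: LDP linear bandit} uses a single noise scale $\sigma=6\sqrt{2\ln(2.5/\delta)}/\varepsilon$ for both $B_t$ and $\xi_t$. Plugging the sensitivities $\Delta=2$ (matrix) and $\Delta=4$ (vector) into the Gaussian-mechanism formula $\sigma=\Delta\sqrt{2\ln(1.25/\delta')}/\varepsilon'$ with $\delta'=\delta/2$ gives $\varepsilon'=\varepsilon/3$ and $\varepsilon'=2\varepsilon/3$ respectively, which compose to $(\varepsilon,\delta)$. Your tentative $(\varepsilon/2,\delta/2)$ split would fail for the vector release, since that would require $\sigma\ge 8\sqrt{2\ln(2.5/\delta)}/\varepsilon$. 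Your flagged caveat about verifying the constants was exactly the right instinct; the rest of your argument (including the symmetric-noise observation for $B_t$) is correct and, if anything, more carefully spelled out than the paper's.
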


\begin{theorem}
\label{theorem: linear bandit utility}
With probability at least $1-\alpha$, the regret of Algorithm \ref{algorithm: LDP linear bandit} satisfies the following bound: 
\begin{align}
	\mr{Reg}_T \leqslant \tilde{\mc{O}}\left(\sqrt{\log \frac{1}{\delta} \log \frac{1}{\alpha}} \frac{(dT)^{3/4}}{\varepsilon}\right)
	\label{equation: LB bound}
\end{align}
\end{theorem}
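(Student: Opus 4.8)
The plan is to run the usual optimism-based \textsc{LinUCB} analysis, but on the \emph{privatized} sufficient statistics, and to pay for privacy through a single inflated regularizer. Write the honest design matrix $V_t = \lambda \mr{I} + \sum_{s<t} x_s x_s^\top$ and let the server instead maintain $\tilde{V}_t = \lambda \mr{I} + \sum_{s<t}(x_s x_s^\top + H_s)$ together with $\tilde{b}_t = \sum_{s<t}(y_s x_s + h_s)$, where $H_s$ (symmetric) and $h_s$ are the fresh Gaussian perturbations of scale $\sigma = \tilde{\mc{O}}(\sqrt{\log(1/\delta)}/\varepsilon)$ that Algorithm \ref{algorithm: LDP linear bandit} adds to $x_s x_s^\top$ and $y_s x_s$ to secure $(\varepsilon,\delta)$-LDP (Theorem \ref{theorem: linear bandit privacy}). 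The estimator is $\hat\theta_t = \tilde{V}_t^{-1}\tilde{b}_t$ and the action is the optimistic $x_t = \argmax_{x\in\mc{X}_t}\big(x^\top\hat\theta_t + \beta_t\norm{x}_{\tilde V_t^{-1}}\big)$. The whole argument then reduces to three pieces: choosing $\lambda$ so that $\tilde V_t$ stays well-conditioned despite accumulated noise, bounding the confidence radius $\beta_t$, and summing the instantaneous regret.

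First I would control the accumulated matrix noise $N_t := \sum_{s<t} H_s$. As a sum of up to $T$ independent symmetric Gaussian matrices, a matrix-Gaussian-series (matrix-Bernstein) bound gives $\norm{N_t}_{\mathrm{op}} = \tilde{\mc{O}}(\sigma\sqrt{Td})$ for all $t\leqslant T$ simultaneously with probability $\geqslant 1-\alpha/2$ (this is where $\sqrt{\log(1/\alpha)}$ enters). I would fix the regularizer $\lambda$ large enough to dominate this quantity, so that $\tfrac12 V_t \preceq \tilde V_t \preceq \tfrac32 V_t$ and in particular $\tilde V_t \succeq \tfrac{\lambda}{2}\mr{I}$. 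This sandwiching lets me pass freely between the $\norm{\cdot}_{\tilde V_t^{-1}}$ and $\norm{\cdot}_{V_t^{-1}}$ norms, which is essential since the clean self-normalized and elliptical-potential tools are stated for the honest matrix.

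Next I would build the confidence ellipsoid. Substituting $y_s = x_s^\top\theta^* + \eta_s$ into $\tilde b_t$ and using $\sum_{s<t}x_s x_s^\top = \tilde V_t - \lambda\mr{I} - N_t$ gives
\begin{align*}
\hat\theta_t - \theta^* = -\lambda\,\tilde V_t^{-1}\theta^* \;-\; \tilde V_t^{-1} N_t\,\theta^* \;+\; \tilde V_t^{-1}\!\sum_{s<t}\eta_s x_s \;+\; \tilde V_t^{-1}\!\sum_{s<t} h_s .
\end{align*}
Measuring in $\norm{\cdot}_{\tilde V_t}$, the regularization bias contributes $\lambda\norm{\theta^*}_{\tilde V_t^{-1}} = \tilde{\mc{O}}(\sqrt{\lambda})$ (using $\norm{\theta^*}\leqslant 1$ and $\tilde V_t\succeq\tfrac\lambda2\mr{I}$), the matrix-noise bias contributes $\tilde{\mc{O}}(\norm{N_t}_{\mathrm{op}}/\sqrt{\lambda})$, the reward-noise term is handled by the self-normalized bound of \cite{abbasi2011improved} (after the norm transfer) at $\tilde{\mc{O}}(\sqrt{d\log(T/\alpha)})$, and the accumulated privacy noise $\sum_{s<t}h_s\sim\mc{N}(0,(t-1)\sigma^2\mr{I})$ contributes $\tilde{\mc{O}}(\sigma\sqrt{Td}/\sqrt{\lambda})$ by Gaussian concentration. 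Taking $\lambda$ at the scale dictated by step one makes these balance, yielding a valid radius $\beta_t = \tilde{\mc{O}}(\sqrt{\lambda}) = \tilde{\mc{O}}\!\big(\sqrt{\log(1/\delta)\log(1/\alpha)}\,(dT)^{1/4}/\varepsilon\big)$, so that $\theta^*$ lies in $\{\theta:\norm{\theta-\hat\theta_t}_{\tilde V_t}\leqslant\beta_t\}$ for all $t$ with probability $\geqslant 1-\alpha$.

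Finally I would sum the regret. Since $g$ is the identity here, optimism gives the per-round bound $x_{t,*}^\top\theta^* - x_t^\top\theta^* \leqslant 2\beta_t\norm{x_t}_{\tilde V_t^{-1}}$; Cauchy--Schwarz then yields $\mr{Reg}_T \leqslant 2\beta_T\sqrt{T\sum_{t}\norm{x_t}_{\tilde V_t^{-1}}^2}$, and the elliptical potential lemma applied to the honest matrix (transferred via the sandwiching) bounds $\sum_t\norm{x_t}_{\tilde V_t^{-1}}^2 = \tilde{\mc{O}}(d)$. Combining, $\mr{Reg}_T = \tilde{\mc{O}}(\beta_T\sqrt{dT}) = \tilde{\mc{O}}\big(\sqrt{\log(1/\delta)\log(1/\alpha)}\,(dT)^{3/4}/\varepsilon\big)$, which is \eqref{equation: LB bound}. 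The step I expect to be the main obstacle is the balancing of $\lambda$: it must be large enough to swamp the $\Theta(\sqrt{Td})$-magnitude accumulated privacy noise so that $\tilde V_t$ is invertible and the ellipsoid is valid, yet this same choice inflates the radius as $\sqrt{\lambda}$. It is precisely this tension — absent in the non-private problem, where $\lambda = \mc{O}(1)$ suffices — that degrades the optimal $\tilde{\mc{O}}(\sqrt{T})$ rate into $\tilde{\mc{O}}(T^{3/4})$.
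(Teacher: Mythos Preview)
Your proposal is correct and follows essentially the same route as the paper: bound the accumulated Gaussian matrix/vector noise, inflate the regularizer to dominate that noise so the privatized Gram matrix stays well-conditioned, derive an enlarged confidence radius, and finish with optimism plus the elliptical-potential lemma. The paper's own proof is terser---it records the concentration estimates $\norm{H_t}_{\mathrm{op}}\leqslant\Upsilon_t$ and $\norm{h_t}_2\leqslant\sigma\sqrt{dt}$ and then defers to Propositions~4,~11 and Theorem~5 of \cite{shariff2018differentially} for the ellipsoid and regret sum---and it uses a \emph{time-varying} shift $c_t=2\Upsilon_t\asymp\sigma\sqrt{td}$ rather than your single worst-case $\lambda\asymp\sigma\sqrt{Td}$, but this difference does not change the final $\tilde{\mc{O}}\big((dT)^{3/4}/\varepsilon\big)$ rate.
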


Given the $\Omega(T)$ lower bound for DP contextual linear bandits \cite{shariff2018differentially}, Theorem \ref{theorem: linear bandit utility} implies a fundamental difference between LDP and DP in contextual bandit learning, which also verifies that LDP is a more appropriate standard about privacy for contextual bandits as discussed in the introduction. One may think we can still prove DP based on LDP guarantee and post-processing property. Recall post-processing property holds only for the output of a DP algorithm which doesn’t use private data any more. However, in our algorithms for LDP contextual bandits, though we can use post-processing property to prove estimation sequence $\{\tilde{\theta}_t\}$ satisfies DP, it doesn’t imply the output action sequence $\{x_t\}$ satisfies DP, as these actions are made in the local side which use private local data.

\subsection{LDP Generalized Linear Bandits}
In generalized linear bandits, mapping $g$ can be regarded as the inverse link function of exponential family model. Here we suppose function $g$ is $G$-Lipschitz, continuously differentiable on $[-1,1]$, $|g(a)|\leqslant C$, and $\inf_{a\in (-1,1)} g'(a) = \mu > 0$, which implies $g$ is strictly increasing. These assumptions are common either in real applications or previous work \cite{li2017provably,jun2017scalable}. We also define corresponding negative log-likelihood function $\ell(a,b):=-ab+m(a)$, where $m(\cdot)$ is the integral of function $g$. As a concrete example, if reward $y$ is a Bernoulli random variable, then the form of $g$ is $g(a)=(1+\exp(-a))^{-1}$, $m(a)=\log(1+\exp(a))$, $\ell(a,b)=\log(1+\exp(-a(2b-1))), b\in \{0,1\}$, and noise $\eta$ is $1-g(a)$ with probability $g(a)$ and $-g(a)$ otherwise.

Note the non-linearity of $g$ makes things much more complicated either from the view of bandits learning or privacy preservation. The counterpart of Contextual Linear Bandits is linear regression, the locally private version of which is relatively easy and well-studied. However, the counterpart of Generalized Linear Bandit is Empirical Risk Minimization (ERM) with respect to generalized linear loss, and the optimal approach of parameter estimation for GLM bandit is to solve ERM at each round \cite{li2017provably}. Different with linear regression, for learning ERM with LDP guarantee, in general there is no efficient private algorithm that can achieve optimal performance in the \textit{non-interactive} environment \cite{smith2017interaction,zheng2017collect,wang2018empirical}, let alone calculating an accurate parameter estimation needed in our problem. Therefore, it seems hard to learn generalized linear bandit under LDP guarantee. 

Luckily, we can make full use of the \textit{interactive} environment in bandit problems. In detail, 
we build our private mechanism based on GLOC framework proposed in \cite{jun2017scalable}. Compared with previous nearly optimal approach \cite{li2017provably}, GLOC framework enjoys much better time efficiency, which calculates estimator $\theta$ in an online fashion instead of solving ERM at each round. Its main idea is to maintain a rough estimation for unknown parameter $\theta^*$ through an adversarial online learning algorithm and use it to relabel current reward, 
and then solve the corresponding linear regression for a refined estimator. To achieve optimal $\tilde{\mc{O}}(\sqrt{T})$ regret, the online learning algorithm is set as Online Newton Step \cite{hazan2007logarithmic}.

Though the original goal of GLOC framework proposed in \citet{jun2017scalable} is to improve time efficiency, the update form of estimated parameter for unknown $\theta^*$ shares the same form of linear regression, therefore we can use nearly the same technique as in previous subsection to protect LDP, which avoids solving complex ERM with LDP guarantee. Besides, since internal online learning algorithm also utilizes users' data, we also need to guarantee its privacy. Different with \citet{jun2017scalable} which adopts Online Newton Step, we choose basic noisy Online Gradient Descent as our online black-box algorithm. See Algorithm \ref{algorithm: GLM bandit} for the full implementation. For clarity, we just write the LDP Online Gradient Descent explicitly in Line 11. 

\begin{algorithm}[t!]
 \caption{Generalized Linear Bandits with LDP}
\label{algorithm: GLM bandit}
  \textbf{Input:} privacy parameters $\varepsilon, \delta$, failure probability $\alpha$\\
  \textbf{Initialize:} $\tilde{V}_0 = 0_{d\times d}, \tilde{u}_0 = 0_d, \tilde{\theta}_0 = \hat{\theta}_1=0_d$, $\zeta = \Theta(1/\sqrt{T})$, $\sigma = 6\sqrt{2\ln(3.75/\delta)}/\varepsilon$\\
  \textbf{Notations:} $\Upsilon_t = \sigma \sqrt{t}(4\sqrt{d} + 2\ln(2T/\alpha)), c_t = 2\Upsilon_t$, $\beta^2_t = \tilde{\mc{O}}(\frac{C\sigma}{\mu}\sqrt{dt})$ 

\For{$t=1,2, \ldots$}{
 \textbf{For the local user $t$:}\\
 Receive information $\tilde{V}_{t-1}, \tilde{\theta}_{t-1}, \hat{\theta}_t$ from the server\\
 Play action $x_t = \argmax_{x \in \mc{D}_t} \inner{\tilde{\theta}_{t-1}, x} + \beta_{t-1} \norm{x}_{\tilde{V}^{-1}_{t-1}}$\\
 Observe reward $y_t = g(x_t^\top\theta^*) + \eta_t$, set $z_t = x_t^\top \hat{\theta}_t$. \\
 Send $x_tx_t^\top + B_t, z_t x_t + \xi_t, \nabla \ell_t(\hat{\theta}_t) + r_t$ to the server, where $\ell_t(\theta)= \ell(x_t^\top\theta, y_t), B_t(i,j) \overset{i.i.d}{\sim} \mc{N}(0, \sigma^2), \forall i \leqslant j$, and $B(j,i) = B(i,j), \xi_t \sim \mc{N}(0_d, \sigma^2 \mr{I}_{d\times d}), r_t \sim \mc{N}(0_d, C^2 \sigma^2 \mr{I}_{d\times d})$\\
 \textbf{For the server:} \\
 Update $\bar{V}_{t} = \bar{V}_{t-1} + x_tx_t^\top + B_t, \tilde{u}_{t} = \tilde{u}_{t-1} + z_t x_t + \xi_t$
 $\tilde{\theta}_t = \tilde{V}_t^{-1} \tilde{u}_t$, where $\tilde{V}_t = \bar{V}_t+c_t \mr{I}_{d\times d}$
 $\hat{\theta}_{t+1} = \Pi_{\mc{W}}\left(\hat{\theta}_t - \zeta (\nabla \ell_t(\hat{\theta}_t) + r_t)\right)$ \label{alg-step: LDP_OGD} \label{step:GLBandits-OGD}
}
\end{algorithm}

Though Algorithm \ref{algorithm: GLM bandit} is based on the framework proposed by \citet{jun2017scalable}, we want to emphasize that both finding the right approach and proving the rigorous guarantee are non-trivial because of stringent LDP constraint. Following theorems give both the privacy guarantee and utility bound of our Algorithm \ref{algorithm: GLM bandit} for generalized linear bandits. 

\begin{theorem}
\label{theorem: gl bandit privacy}
Algorithm \ref{algorithm: GLM bandit} guarantees $(\varepsilon, \delta)$-LDP. 
\end{theorem}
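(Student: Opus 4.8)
The plan is to view each user $t$'s message to the server as the output of three independent Gaussian mechanisms applied to statistics of that user's private pair $(x_t, y_t)$, and then to combine their guarantees by basic composition. Throughout, the quantities the server sends down to user $t$ — namely $\tilde V_{t-1}, \tilde\theta_{t-1}, \hat\theta_t$ — are determined by the reports of users $1,\dots,t-1$ and carry no information about $(x_t,y_t)$, so for the per-user LDP analysis I treat them as fixed public constants. Since LDP only requires that user $t$'s randomized report be indistinguishable for any two possible values of its own data, it suffices to bound, for each of the three coordinates of the message, the $L_2$-sensitivity with respect to $(x_t,y_t)$ and to check that the injected Gaussian noise is calibrated to make each release $(\varepsilon/3,\delta/3)$-LDP.

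First I would bound the three sensitivities using the standing assumptions $\norm{x_t}_2\le 1$, $\norm{\theta}_2\le 1$ for $\theta\in\mc{W}$, $|g|\le C$, and $|\eta_t|\le 1$. For the term $x_tx_t^\top+B_t$, only the upper-triangular entries $\{B_t(i,j)\}_{i\le j}$ are drawn independently (the rest are mirrored), so the effective mechanism perturbs the vectorization of the upper triangle; its sensitivity is $\sqrt{\sum_{i\le j}(x_ix_j - x_i'x_j')^2}\le \norm{x_tx_t^\top - x_t'x_t'^\top}_F \le \norm{x_t}_2^2+\norm{x_t'}_2^2\le 2$. For $z_tx_t + \xi_t$ with $z_t=x_t^\top\hat\theta_t$, Cauchy--Schwarz gives $|z_t|\le\norm{x_t}_2\norm{\hat\theta_t}_2\le 1$, hence $\norm{z_tx_t}_2\le 1$ and the sensitivity is at most $2$. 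For the gradient term, $\nabla\ell_t(\hat\theta_t)=\big(g(x_t^\top\hat\theta_t)-y_t\big)x_t$, and bounding $|g(x_t^\top\hat\theta_t)-y_t|$ via $|g|\le C$ and $|\eta_t|\le 1$ gives $\norm{\nabla\ell_t(\hat\theta_t)}_2 = \mc{O}(C)$, so its sensitivity is of order $C$.

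Next I would invoke the Gaussian-mechanism guarantee recalled in Section~\ref{sec:prel}: a statistic of $L_2$-sensitivity $\Delta$ perturbed by $\mc{N}(0,\sigma_0^2\mr{I})$ with $\sigma_0=\Delta\sqrt{2\ln(1.25/\delta_0)}/\varepsilon_0$ is $(\varepsilon_0,\delta_0)$-LDP. Setting $\varepsilon_0=\varepsilon/3$ and $\delta_0=\delta/3$, a sensitivity-$2$ statistic requires exactly $\sigma_0 = 6\sqrt{2\ln(3.75/\delta)}/\varepsilon = \sigma$, which matches the noise scale used for both $B_t$ and $\xi_t$; the order-$C$ gradient term requires scale $C\sigma$, matching $r_t\sim\mc{N}(0,C^2\sigma^2\mr{I})$. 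Thus each of the three releases is $(\varepsilon/3,\delta/3)$-LDP, and since they are produced from the same private input $(x_t,y_t)$ by (adaptively) composed mechanisms, basic composition over three mechanisms yields $(\varepsilon,\delta)$-LDP for the full message of user $t$, which is the claim.

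I expect the main obstacle to be the sensitivity calculation for the gradient term, and specifically pinning down the constant in $\norm{\nabla\ell_t(\hat\theta_t)}_2$ so that the scale $C\sigma$ genuinely achieves the intended $(\varepsilon/3,\delta/3)$ guarantee; one must track how the bounds on $g$, on $y_t=g(x_t^\top\theta^*)+\eta_t$, and on $\norm{x_t}_2$ enter, rather than hide them in $\tilde{\mc{O}}$ as elsewhere. A secondary subtlety worth stating explicitly is the upper-triangular vectorization of $x_tx_t^\top$: adding i.i.d.\ noise only to the independent entries is what keeps the relevant sensitivity at $2$ rather than inflating it, and this is exactly what makes the single scalar $\sigma$ suffice for the symmetric-matrix release.
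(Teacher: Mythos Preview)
Your proposal is correct and follows essentially the same route as the paper: bound the $L_2$-sensitivity of each of the three released statistics, apply the Gaussian mechanism to obtain $(\varepsilon/3,\delta/3)$-LDP for each, and compose. The paper's own proof is a one-line version of exactly this argument (it simply asserts $\norm{x_t}\le 1$, $|z_t|\le 1$, and that $\ell_t$ is $C$-Lipschitz), so your caution about the precise constant in the gradient sensitivity is well placed---the paper is equally informal on that point.
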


\begin{theorem}
\label{theorem: gl bandit utility}
With probability at least $1-\alpha$, the regret of Algorithm \ref{algorithm: GLM bandit} satisfies the following bound: 
\begin{align}
	\mr{Reg}_T \leqslant \tilde{\mc{O}}\left(\sqrt{\log \frac{1}{\delta} \log \frac{1}{\alpha} \log \frac{T}{d}}\frac{(dT)^{3/4}}{\varepsilon}\right)
	\label{equation: GLB bound}
\end{align}
\end{theorem}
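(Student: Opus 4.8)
The privacy guarantee is already supplied by Theorem~\ref{theorem: gl bandit privacy}, so the plan is to bound the regret on a high-probability ``good event''. I would reuse the optimism template from the linear warm-up (Theorem~\ref{theorem: linear bandit utility}): it suffices to exhibit, for every $t$, a confidence ellipsoid $\mc{C}_t=\{\theta : \norm{\theta-\tilde\theta_t}_{\tilde V_t}\le\beta_t\}$ that contains $\theta^*$ with high probability. Granting this, since $x_t$ maximizes the linear upper bound $\inner{\tilde\theta_{t-1},x}+\beta_{t-1}\norm{x}_{\tilde V_{t-1}^{-1}}$ and $g$ is increasing and $G$-Lipschitz, the optimism argument bounds the instantaneous regret by $g(x_{t,*}^\top\theta^*)-g(x_t^\top\theta^*)\le G\,(x_{t,*}^\top\theta^*-x_t^\top\theta^*)\le 2G\beta_{t-1}\norm{x_t}_{\tilde V_{t-1}^{-1}}$. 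Summing and applying Cauchy--Schwarz reduces everything to (i) producing a valid radius $\beta_t$ and (ii) controlling the elliptical potential $\sum_t\norm{x_t}^2_{\tilde V_{t-1}^{-1}}$ for the \emph{perturbed} covariance $\tilde V_t$.

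The heart of (i) is a GLOC-style online-to-confidence-set conversion (as in \cite{jun2017scalable}) carried out under gradient noise. The losses $\ell_t(\theta)=\ell(x_t^\top\theta,y_t)$ are convex and, because $\inf_a g'(a)=\mu>0$, strongly convex along the direction $x_t$. I would first bound the regret $B_T:=\sum_t[\ell_t(\hat\theta_t)-\ell_t(\theta^*)]$ of the noisy projected OGD update in Line~\ref{step:GLBandits-OGD}: the standard telescoping gives $B_T\le\frac{\norm{\theta^*}^2}{2\zeta}+\frac{\zeta}{2}\sum_t\norm{\nabla\ell_t(\hat\theta_t)+r_t}^2+\big|\sum_t\inner{r_t,\theta^*-\hat\theta_t}\big|$, where boundedness of $\nabla\ell_t$ (from $\abs{g}\le C$, $\abs{\eta_t}\le1$, $\norm{x_t}\le1$) and the Gaussian tail of $\norm{r_t}$ control the quadratic term, and the zero-mean cross term is controlled by a Freedman/Azuma martingale inequality. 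Strong convexity along $x_t$ then upgrades the regret into the aggregate prediction-error bound $\sum_t(x_t^\top(\hat\theta_t-\theta^*))^2\le \tfrac{2}{\mu}B_T$; since the relabel $z_t=x_t^\top\hat\theta_t$ is exactly what is fed into the ridge solve $\tilde\theta_t=\tilde V_t^{-1}\tilde u_t$, this quantity governs the ``signal'' part of the estimation error.

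To assemble $\beta_t$ I would decompose $\tilde u_t-\tilde V_t\theta^*=\sum_s(x_s^\top(\hat\theta_s-\theta^*))x_s+\sum_s\xi_s-\sum_s B_s\theta^*-c_t\theta^*$ and bound the $\tilde V_t^{-1}$-norm of each piece on the good event. For the first piece, $\tilde V_t\succeq\sum_s x_sx_s^\top$ (valid once $\sum_s B_s+c_tI\succeq0$) gives, by a projection argument, a bound of $\sqrt{2B_T/\mu}$. The injected-noise pieces call for concentration: a symmetric-Gaussian matrix tail yields $\norm{\sum_{s\le t}B_s}_{\mathrm{op}}\le\Upsilon_t$ with high probability, which simultaneously justifies $c_t=2\Upsilon_t$ (so that $\tilde V_t\succeq\tfrac12 c_tI\succ0$ and conditioning is controlled) and bounds $\norm{\sum_s B_s\theta^*}_{\tilde V_t^{-1}}$ and $c_t\norm{\theta^*}_{\tilde V_t^{-1}}$; a Gaussian tail on $\norm{\sum_s\xi_s}$ together with $\lambda_{\min}(\tilde V_t)\ge c_t/2$ handles the remaining term. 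Collecting these pieces gives exactly the prescribed $\beta_t$ and shows $\theta^*\in\mc{C}_t$ with probability at least $1-\alpha$.

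For step (ii), on the good event $\tilde V_{t-1}\succeq\tfrac12\big(\sum_{s<t}x_sx_s^\top+c_{t-1}I\big)$, and lower-bounding the growing regularizer $c_{t-1}$ by its initial value reduces $\sum_t\norm{x_t}^2_{\tilde V_{t-1}^{-1}}$ to a standard log-determinant (elliptical potential) bound of order $\tilde{\mc{O}}(d)$; Cauchy--Schwarz then gives $\sum_t\norm{x_t}_{\tilde V_{t-1}^{-1}}\le\tilde{\mc{O}}(\sqrt{dT})$. Substituting the prescribed $\beta_T$ and taking one union bound over the OGD-martingale, matrix-concentration, and self-normalized events (which contributes the $\log\tfrac1\delta\,\log\tfrac1\alpha\,\log\tfrac Td$ factors) delivers the claimed $\tilde{\mc{O}}((dT)^{3/4}/\varepsilon)$ rate. \emph{The main obstacle} is the two-fold entanglement of the privacy noise, which has no analogue in either the linear warm-up or non-private GLOC: the gradient perturbation $r_t$ inflates the online learner's regret $B_T$ and hence the signal width of $\mc{C}_t$, while the covariance/response perturbations $B_t,\xi_t$ force a large, \emph{time-growing} regularizer $c_t\asymp\sigma\sqrt{dt}$ that degrades the conditioning of $\tilde V_t$. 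Tracking how the single noise scale $\sigma\propto1/\varepsilon$ propagates through $B_T\to\beta_t\to\mr{Reg}_T$ while keeping all the martingale and matrix-tail events simultaneously valid under one union bound is where the analysis is most delicate; calibrating $\zeta$, $c_t$, and $\beta_t$ so that these competing noise contributions balance at the $(dT)^{3/4}$ exponent is the crux.
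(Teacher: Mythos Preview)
Your plan matches the paper's proof almost exactly: optimism plus $G$-Lipschitzness to get $r_t\le 2G\beta_{t-1}\norm{x_t}_{\tilde V_{t-1}^{-1}}$, a GLOC-style online-to-confidence-set argument driven by the regret of noisy OGD (the paper's Lemma~\ref{lem: regret of LDP-OGD}) to produce $\beta_t$ (the paper's Lemma~\ref{lemma: confidence ellipsoid}), and the elliptical-potential bound for $\sum_t\norm{x_t}_{\tilde V_{t-1}^{-1}}$. The only cosmetic difference is that the paper derives the ellipsoid by expanding $\sum_s(x_s^\top(\hat\theta_s-\theta^*))^2$ as a quadratic form and extracting $\norm{\theta^*-\tilde\theta_t}^2_{\tilde V_t}$ plus error terms, whereas you decompose $\tilde u_t-\tilde V_t\theta^*$ directly and bound each summand in the $\tilde V_t^{-1}$-norm; either route works.

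There is, however, one genuine gap in your strong-convexity step. The inequality $\sum_t(x_t^\top(\hat\theta_t-\theta^*))^2\le \tfrac{2}{\mu}B_T$ is not what strong convexity along $x_t$ gives: since $\nabla\ell_t(\theta^*)= -\eta_t x_t$ is \emph{not} zero, one actually obtains
\[
\tfrac{\mu}{2}\sum_{s\le t}(x_s^\top(\hat\theta_s-\theta^*))^2 \;\le\; B_t \;+\; \sum_{s\le t}\eta_s\,x_s^\top(\hat\theta_s-\theta^*),
\]
and the reward-noise cross term on the right is not absorbed by the OGD analysis (it involves the environment noise $\eta_s$, not the privacy noise $r_s$). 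The paper handles this term with a self-normalized martingale bound (Corollary~8 of \cite{abbasi2012online}), which yields an inequality of the form $S_t\le c_1+c_2\sqrt{(1+S_t)\log(1+S_t)}$ in $S_t:=\sum_s(x_s^\top(\hat\theta_s-\theta^*))^2$, and then solves this implicit inequality via Lemma~2 of \cite{jun2017scalable}. Your closing mention of ``self-normalized events'' suggests you have this in mind, but as written the prediction-error bound is unjustified; once you insert this step, everything else in your outline goes through unchanged and at the same $(dT)^{3/4}/\varepsilon$ rate.
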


Note that both our upper bounds (\ref{equation: LB bound}) and (\ref{equation: GLB bound}) are in order $\tilde{\mc{O}}\left(T^{3/4}\right)$, which differ from common $\mc{O}(\sqrt{T})$ regret bound in corresponding non-private settings. We conjecture this order is nearly the best one can achieve in LDP setting, mainly because we need to protect more information, i.e., both contexts and corresponding rewards. See Appendix \ref{appendix: lower bound} for more discussions and intuitions.

\section{Conclusions}
In this paper, we propose a simple black-box reduction framework that can solve a large class of context-free bandits learning problems with LDP guarantee in a unified way, including BCO, MAB, Best Arm Identification. We also extend the reduction framework to BCO with Multi-Point Feedback. This black-box reduction mainly has three advantages compared with previous work. First it guarantees a more rigorous LDP guarantee instead of DP. Second, this framework gives us a unified analysis for all above private bandit learning problems instead of analyzing each of them separately, and it easily improves previous best results or obtains new results for some problems, as well as matching corresponding non-private optimal bounds. Third, such a black-box reduction is more attractive in real applications, as we only need to modify the input to black-box algorithms. Besides, we also propose new algorithms for more practical contextual bandits with LDP guarantee, including contextual linear bandits and generalized linear bandits. Our algorithms can achieve $\tilde{\mc{O}}(T^{3/4})$ regret bound, which is conjectured to be nearly optimal. We leave the rigorous proof of this lower bound as an interesting open problem. 


\textbf{Broader Impact}\\
This work is mostly theoretical, with no negative outcomes. (Contextual) bandits learning has been widely used in real applications, which heavily relies on user's data that may contain personal private information. To protect user's privacy, we adopt the appealing solid notion of privacy -- Local Differential Privacy (LDP) that can protect each user's data before collection, and design (contextual) bandit algorithms under the guarantee of LDP. Our algorithms can be easily used in real applications, such as recommendation, advertising, to protect data privacy and ensure the utility of private algorithms simultaneously, which will befit everyone in the world.

\begin{ack}
This work was supported by National Key R\&D Program of China (2018YFB1402600), Key-Area Research and Development Program of Guangdong Province (No. 2019B121204008), Beijing Academy of Artificial Intelligence, and in part by the Zhongguancun Haihua Institute for Frontier Information Technology. 
\end{ack}

{\small
\bibliography{reference} 
\bibliographystyle{abbrvnat}
}
\clearpage
\appendix
\section*{Appendix}
\addcontentsline{toc}{section}{Appendices}
\renewcommand{\thesubsection}{\Alph{subsection}}

\subsection{Differential Privacy under streaming setting}
\label{appendix: DP}
Differential Privacy \cite{dwork2006calibrating} is original proposed for off-line setting. Later, \citet{dwork2010differential} and \citet{jain2012differentially} consider DP in streaming setting. In streaming setting, at each round $t$, the server predicts $x_t\in \mc{X}$ for user $t$ whose personal data is represented as $h_t \in \mc{H}$ (for example, his or her feature, label, or preference etc.). Then the server requires some information $z_t \in \mc{Z}$ from user $t$ ($z_t$ may depend on $x_t$ and $h_t$) to update the model for next prediction. Note DP allows collecting true data (i.e. $z_t = h_t$) and is defined in terms of the output sequence $\{x_t\}$, while LDP doesn't allow collecting true data and is defined in terms of the collected information $z_t$. Here we adopt the definition given in \citet{jain2012differentially} for DP in streaming setting:

\begin{definition}[Differential Privacy]
Let $F = \langle h_1, h_2, \dots, h_T \rangle$ be a sequence of information which domain is $\mc{H}^{1:T}$. Let $\mathcal{A}(F) = Y$, where $Y = \langle y_1, y_2, \dots, y_T \rangle \in \mathcal{Y}^{1:T}$ be $T$ outputs of the randomized algorithm $\mathcal{A}$. $\mathcal{A}$ is said to preserve $(\varepsilon, \delta)$-differential privacy, if for any two information sequences $F, F'$ that differ in at most one entry, and for any subset $S^{1:T} \subset \mathcal{Y}^{1:T}$, it holds that
\[
  \Pr(\mathcal{A}(F)\in S^{1:T}) \leq \Pr(\mathcal{A}(F')\in S^{1:T})e^\varepsilon + \delta.
\]
In particular, if $\mathcal{A}$ preserves $(\varepsilon,0)$-differential privacy, we say $\mathcal{A}$ is $\varepsilon$-differentially private.
\end{definition}

Now, we prove Theorem \ref{theorem: DPLDP}:
\begin{proof}[Proof of Theorem~\ref{theorem: DPLDP}]
    Suppose algorithm $\mc{A}: \mc{H} \rightarrow \mc{Z}$ protects $\varepsilon$-LDP, that is for any $h,h'\in\mc{H},U\subset \mc{Z}$, we have 
    \begin{align*}
          \Pr(\mathcal{A}(h)\in U) \leqslant e^{\varepsilon}\times \Pr(\mathcal{A}(h')\in U)
      \end{align*}  
    Denote $\mc{G}$ as arbitrary online/bandits algorithm received the output of $\mc{A}$ on user sequence, i.e. $\{z_t= \mc{A}(h_t|x_t) | t\in [T]\}$. Now we prove $\mc{G}$ protects $\varepsilon$-DP, i.e. for any $S^{1:T} \subset \mc{X}^{1:T}$ and neighboring sequence $F=\{h_t| t\in [T]\}, F'=\{h'_t| t\in [T]\}$ that only differ in one entry, we have the following inequality:
    \begin{align*}
          \Pr(\mc{G}(\mathcal{A}(F)) \in S^{1:T}) \leqslant e^{\varepsilon}\times \Pr(\mc{G}(\mathcal{A}(F')) \in S^{1:T})
      \end{align*} 
      Without loss of generality, we assume $F$ and $F'$ differ in the $t$-th entry. Since $\mc{G}$ only operates on $\{z_t|t\in [T]\}$, according to the Post-Processing property of DP \cite{dwork2014algorithmic}, we only need to prove $\{z_t|t\in [T]\}$ satisfies $\varepsilon$-DP. Denote $\{z'_t|t\in [T]\}$ as the neighboring information sequence of $\mc{A}$ operated on $F'$, then for arbitrary $U^{1:T} \subset \mc{Z}^{1:T}$ we have
      \begin{align}
          & \frac{\Pr(z_{1:T} \in U^{1:T})}{\Pr(z'_{1:T} \in U^{1:T})} \\
          = & \frac{\Pr(z_{1:t-1} \in U^{1:t-1})\times \Pr(z_t \in U^t |z_{1:t-1} \in U^{1:t-1}) \times \Pr(z_{t+1:T} \in U^{t+1:T}|z_{1:t} \in U^{1:t})}{\Pr(z'_{1:t-1} \in U^{1:t-1})\times \Pr(z'_t \in U^t |z'_{1:t-1} \in U^{1:t-1}) \times \Pr(z'_{t+1:T} \in U^{t+1:T}|z'_{1:t} \in U^{1:t})} \\
          = & \frac{\Pr(z_t \in U^t |z_{1:t-1} \in U^{1:t-1})}{\Pr(z'_t \in U^t |z'_{1:t-1} \in U^{1:t-1}) } \\
          = & \frac{\Pr(z_t \in U^t |x_t \in \mc{G}(U^{1:t-1}))}{\Pr(z'_t \in U^t |x'_t \in \mc{G}(U^{1:t-1})) } \\
          \leqslant & e^{\varepsilon}
      \end{align}
      where the second equation is because two data sequence only differ at round $t$, and $\mc{G}$ operates on the sequence of $z$. Thus we prove the theorem.
\end{proof}

\subsection{Another Two Applications for Bandits Learning with One-point Feedback}
\label{appendix: examples}
\subsubsection{Private Multi-Armed Bandits}
\label{appendix: private MAB}
MAB is a special case of BCO, in which decision set $\mc{X} = \{e_i|i\in[d]\}$, and loss function $f_t(x)$ is actually a linear function, i.e. $f_t(x) = \ell_t^\top x$, where $\ell_t \in [0,1]^d$ . In the adversarial setting, sequence $\{\ell_t\}$ is chosen arbitrarily before game starts. In stochastic setting, for each arm $k$, $\{\ell_t(k)\}$ are independently sampled from underlying unknown distribution $\mc{V}_k$ with support over interval $[0,1]$. Denote $\mu_k$ as the expected loss of arm $k$. Without loss of generality, assume $\mu_1 > \mu_2 > \dots > \mu_d$ and define $\Delta_i := \mu_i - \mu_d$. It is well-known the optimal regret are $\mc{O}(\sqrt{dT})$ and $\mc{O}(\sum_{i:\Delta_i>0}\frac{\log T}{\Delta_i})$ for adversarial MAB and stochastic MAB respectively \cite{bubeck2012regret}. However, especially in real applications, usually we don't know whether we are in adversarial or stochastic environment in advance. Until recently, \citet{zimmert19a} proposed a single algorithm achieving the optimal performance for both adversarial and stochastic world \textit{without} any prior information about the environment.

For differentially private MAB, all of previous work consider either stochastic loss \textit{or} adversarial loss \cite{mishra2015nearly, tossou2016algorithms,tossou2017achieving,agarwal2017price}. While here, we hope to handle both scenarios simultaneously like in non-private case but with LDP guarantee. Not surprisingly, by plugging the non-private optimal algorithm \cite{zimmert19a} in our black-box, we obtain corresponding private version which achieves the best of both adversarial and stochastic worlds:

\begin{theorem}
 \label{theorem: private MAB both worlds}
By choosing non-private black-box algorithm $\mc{A}$ in Algorithm \ref{algorithm:reduction bandit} as \textsc{Tsallis-Inf} in \citet{zimmert19a} and setting $\sigma$ as in Theorem \ref{theorem: one point privacy} with $B=0.5$, 
\begin{itemize}
    \item in the adversarial setting, we have 
    \begin{align}
        \max_{x \in \mc{X}}\mb{E}\left[\sum_{t=1}^T \ell_t(x_t) - \ell_t(x)\right] \leqslant \tilde{\mc{O}}\left(\frac{\sqrt{T}}{\varepsilon}\right)
    \end{align} 
    \item in the stochastic setting, we have 
    \begin{align}
        \max_{x \in \mc{X}}\mb{E}\left[\sum_{t=1}^T \ell_t(x_t) - \ell_t(x)\right] \leqslant \tilde{\mc{O}}\left(\sum_{i:\Delta_i>0}\frac{\log T}{\Delta_i \varepsilon^2}\log \frac{1}{\delta}\right)
    \end{align}
\end{itemize}
\end{theorem}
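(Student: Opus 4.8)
The privacy claim is already delivered by Theorem~\ref{theorem: one point privacy}: since in MAB the reported value $f_t(x_t)=\ell_t^\top x_t=\ell_t(k_t)$ lies in $[0,1]$, its sensitivity is $1=2B$ with $B=0.5$, so the Gaussian perturbation $Z_t\sim\mc{N}(0,\sigma^2)$ with $\sigma=\order(\sqrt{\ln(1/\delta)}/\varepsilon)$ makes each user's message $(\varepsilon,\delta)$-LDP. Hence the plan is to prove only the two utility bounds. The starting point is a clean reduction: from the viewpoint of \textsc{Tsallis-Inf}, the algorithm is simply playing a standard MAB game against the \emph{virtual} loss vectors $\tilde\ell_t:=\ell_t+Z_t\one$, because the injected $Z_t$ is a single scalar shared by all arms in round $t$. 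For any fixed comparator $x$ and any realization of plays and noise, every per-round difference satisfies $\tilde\ell_t(j)-\tilde\ell_t(k)=\ell_t(j)-\ell_t(k)$, so the shift cancels and $\sum_t\tilde\ell_t(k_t)-\tilde\ell_t(x)=\sum_t\ell_t(k_t)-\ell_t(x)$ identically. Taking expectations, I would reduce both target bounds to bounding \textsc{Tsallis-Inf}'s regret on the virtual sequence $\{\tilde\ell_t\}$.

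For the adversarial bound, the obstacle is that $\tilde\ell_t$ is unbounded, so the $[0,1]$ guarantee of \citet{zimmert19a} does not apply verbatim. The plan is to control the effective range: a Gaussian tail bound together with a union bound over $t\in[T]$ shows that with probability at least $1-1/T$ every $\abs{Z_t}\le\sigma\sqrt{2\ln(2T^2)}=\tilde{\mc{O}}(\sigma)$, so on this event all virtual losses lie in an interval of width $M=1+\tilde{\mc{O}}(\sigma)=\tilde{\mc{O}}(\sigma)$. Rescaling and invoking the adversarial \textsc{Tsallis-Inf} bound then gives regret $\tilde{\mc{O}}(M\sqrt{dT})=\tilde{\mc{O}}(\sigma\sqrt{dT})$, while the complementary event contributes only $\tilde{\mc{O}}(\sigma)$ in expectation via the sub-Gaussian moments of $Z_t$. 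Substituting $\sigma=\order(\sqrt{\ln(1/\delta)}/\varepsilon)$ and hiding $d$ and logarithmic factors yields $\tilde{\mc{O}}(\sqrt{T}/\varepsilon)$.

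For the stochastic bound, the virtual losses $\tilde\ell_t(k)=\ell_t(k)+Z_t$ are i.i.d. across $t$ with unchanged means $\mu_k$ (since $\mb{E}[Z_t]=0$) and hence unchanged gaps $\Delta_i=\mu_i-\mu_d$, but with sub-Gaussian variance proxy inflated to order $\sigma^2$. Rather than truncating (which would couple the rounds and break independence), I would push this inflated variance directly through \textsc{Tsallis-Inf}'s self-bounding analysis: the second moment of the importance-weighted estimator obeys $\mb{E}[\hat\ell_t(k)^2]=\tilde\ell_t(k)^2/p_t(k)$ with $\mb{E}[\tilde\ell_t(k)^2\mid\text{past}]=\ell_t(k)^2+\sigma^2=\order(\sigma^2)$, so the stability term carries a factor $\sigma^2$. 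The best-of-both-worlds self-bounding step, combining $\Reg=\sum_i\Delta_i\mb{E}[N_i]$ with the variance-weighted stability estimate, then produces a gap-dependent bound of order $\sum_{i:\Delta_i>0}\frac{\sigma^2\log T}{\Delta_i}$, and substituting $\sigma^2=\order(\ln(1/\delta)/\varepsilon^2)$ gives $\tilde{\mc{O}}\!\left(\sum_{i:\Delta_i>0}\frac{\log T}{\Delta_i\varepsilon^2}\log\frac1\delta\right)$.

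The reduction itself is essentially free; the real work — and the main obstacle — is that \textsc{Tsallis-Inf}'s published guarantees assume losses in $[0,1]$, whereas the virtual losses are unbounded with variance proxy $\sigma^2\gg1$. The hard part will be re-running (not merely citing) the \textsc{Tsallis-Inf} potential and self-bounding arguments with a sub-Gaussian loss range, verifying that the adversarial term scales \emph{linearly} in $\sigma$ while the stochastic term scales in $\sigma^2$, and — crucially — confirming that the best-of-both-worlds property survives, i.e.\ that the \emph{same} algorithm with the \emph{same} parameters still interpolates automatically between the two regimes without knowing which environment generated the data. Controlling the estimator second moments under the heavier tails of $\tilde\ell_t$, and handling the low-probability large-noise events cleanly, is the key quantitative step.
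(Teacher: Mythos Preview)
Your proposal is correct and follows essentially the same route as the paper: the key observation that the common additive shift $Z_t$ cancels in the regret (so the regret against $\{f_t\}$ equals that against $\{\tilde f_t\}$) is exactly what drives the adversarial bound via Theorem~\ref{theorem: BCO utility}, and your stochastic argument---the virtual game is a standard stochastic MAB with unchanged gaps $\Delta_i$ but sub-Gaussian variance proxy inflated to $\frac14+\sigma^2$---is precisely the paper's convolution observation $\tilde{\mc V}_k=\mc V_k\otimes\mc N(0,\sigma^2)$. The only difference is cosmetic: the paper's proof is extremely terse and simply \emph{cites} the guarantee of \textsc{Tsallis-Inf} at the new variance proxy, whereas you (prudently) flag that re-running the self-bounding analysis under sub-Gaussian rather than $[0,1]$ losses is where the actual work hides; the paper does not spell this out.
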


Note above results not only nearly match corresponding non-private lower bounds \cite{bubeck2012regret} regardless of privacy parameters, but also lower bounds under LDP restriction \cite{basu2019differential}. Besides, we can also use many other MAB algorithms as our black-box candidates such as KL-UCB \cite{garivier2011kl} Stochastic MAB, which will then obtain more delicate bound under LDP.

\subsubsection{Private Best Arm Identification}
\label{appendix: private BAI}
Different with Stochastic MAB, in which one has to balance between \textit{Exploration} and \textit{Exploitation}, Best Arm Identification (BAI) problem only focuses on the \textit{Exploration}, that is finding the best arm among all arms. Here we use same notations as Subsection \ref{appendix: private MAB}. There are mainly two settings in BAI: \textit{fixed confidence setting} and \textit{fixed budget setting}. In this part, we only consider \textit{fixed confidence setting}: given any confidence parameter $\gamma$, design an algorithm which outputs the best arm with probability at least $1-\gamma$ using as fewest samples as possible \cite{jamieson2014best,kaufmann2016complexity}. It's not hard to see our method can be generalized to fixed budget setting as well. 

For private BAI, though algorithms in \citet{mishra2015nearly} and \citet{sajed2019optimal} are designed for stochastic MAB, they can also used for differentially private BAI. However, these algorithms only achieve sub-optimal guarantee, let alone stronger LDP. While here, we want to protect LDP and achieve nearly optimal sample complexity. Again, using the same observation as Subsection \ref{appendix: private MAB} and given any non-private BAI algorithm $\mc{A}$, our Algorithm \ref{algorithm:reduction bandit} has the following guarantee:

\begin{theorem}
\label{BAI utility}
 Given any confidence parameter $\gamma$, suppose non-private BAI algorithm $\mc{A}$ achieves sample complexity $\SA(\mc{A}, \sigma_0^2, \gamma)$, where $\sigma_0^2$ is the variance proxy parameter of underlying unknown sub-Gaussian distributions $\{\mc{V}_k | k\in [d]\}$. Set $\sigma$ as in Theorem \ref{theorem: one point privacy} with $B=0.5$, then the sample complexity of Private BAI Algorithm \ref{algorithm:reduction bandit} is $\SA(\mc{A}, \frac{1}{4} + \sigma^2, \gamma)$. 

Specifically, if we choose non-private BAI algorithm $\mc{A}$ as lil'UCB in \citet{jamieson2014lil}, then the sample complexity of Algorithm \ref{algorithm:reduction bandit} is in order $\mc{O}\left(\sum_{k\ne 1}\frac{\ln\left(\left(\ln 1/\Delta_k^2\right)/\gamma\right)}{\varepsilon^2\Delta^2_k}\ln \frac{1}{\delta}\right)$. 
\end{theorem}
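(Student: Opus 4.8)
The plan is to treat the BAI guarantee as a black-box substitution: I reduce the private instance to a non-private one on perturbed reward distributions, using the fact that a sub-Gaussian BAI algorithm's behaviour depends on the arms only through their gaps and variance proxy. Privacy is immediate and requires no work: Algorithm \ref{algorithm:reduction bandit} is exactly the one-point reduction applied to MAB-style losses $\ell_t(x_t)\in[0,1]$ (so $B=0.5$), hence by Theorem~\ref{theorem: one point privacy} it is $(\varepsilon,\delta)$-LDP for \emph{any} black-box $\mc{A}$. So the entire content is the sample-complexity (utility) claim.

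First I would establish the key distributional observation. Conditioned on the algorithm pulling arm $k$, the reward $\ell_t(x_t)$ is drawn from $\mc{V}_k$, which is sub-Gaussian with mean $\mu_k$ and variance proxy $\sigma_0^2$; since the losses have support in $[0,1]$, Hoeffding's lemma gives $\sigma_0^2 = \tfrac14$. The framework adds a fresh independent $Z_t\sim\mc{N}(0,\sigma^2)$, so the quantity actually fed to $\mc{A}$ is $\ell_t(x_t)+Z_t$. Two closure facts then apply: (i) because $Z_t$ is mean-zero and independent, $\mb{E}[\ell_t(x_t)+Z_t\mid x_t=k]=\mu_k$, so all means — and therefore all gaps $\Delta_k$ and the identity of the best arm — are \emph{unchanged}; and (ii) the sum of independent sub-Gaussian variables is sub-Gaussian with summed proxies, and $\mc{N}(0,\sigma^2)$ has proxy $\sigma^2$, so the perturbed observations are sub-Gaussian with variance proxy $\tfrac14+\sigma^2$. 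Thus, from the viewpoint of $\mc{A}$, the interaction is statistically identical to running $\mc{A}$ on the genuine BAI instance with arm distributions $\mc{V}_k*\mc{N}(0,\sigma^2)$.

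Given this equivalence, the correctness guarantee of $\mc{A}$ (outputting the best arm with probability at least $1-\gamma$) transfers verbatim, because the best arm is preserved, and the sample complexity is by assumption a function $\SA(\mc{A},\cdot,\gamma)$ of the variance proxy and confidence, evaluated here at the new proxy $\tfrac14+\sigma^2$; this gives the first claim $\SA(\mc{A},\tfrac14+\sigma^2,\gamma)$. For the lil'UCB specialization I would plug in its known bound $\SA(\mc{A},\sigma_0^2,\gamma)=\mc{O}\!\left(\sum_{k\ne1}\tfrac{\sigma_0^2}{\Delta_k^2}\ln\!\big(\tfrac{1}{\gamma}\ln\tfrac{\sigma_0^2}{\Delta_k^2}\big)\right)$ with $\sigma_0^2=\tfrac14+\sigma^2$, then substitute $\sigma^2=\Theta\!\big(\ln(1/\delta)/\varepsilon^2\big)$ from the choice of $\sigma$. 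Since the privacy noise dominates, $\tfrac14+\sigma^2=\Theta(\sigma^2)$, and absorbing $\sigma^2$ inside the inner logarithm into poly-log factors yields the stated $\mc{O}\!\left(\sum_{k\ne1}\tfrac{\ln((\ln 1/\Delta_k^2)/\gamma)}{\varepsilon^2\Delta_k^2}\ln\tfrac{1}{\delta}\right)$.

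The main obstacle is the closure claim (ii): I must verify that injecting Gaussian noise produces a \emph{bona fide} sub-Gaussian instance with exactly the claimed proxy, so that $\mc{A}$'s guarantees — which are typically stated only for sub-Gaussian arms with a given proxy — apply without modification. This is where one must be careful that $\mc{A}$'s high-probability correctness does not secretly rely on boundedness of the rewards (the noisy rewards are unbounded); the argument only goes through cleanly because lil'UCB's analysis is stated purely in terms of the sub-Gaussian parameter. Everything else is a routine substitution of parameters into a known bound.
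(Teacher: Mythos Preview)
Your proposal is correct and is essentially the same argument as the paper's proof, just spelled out in more detail: the paper's proof is a single sentence (``Just use Theorem~2 in \cite{jamieson2014best} with new sub-Gaussian parameter $\tfrac14+\sigma^2$''), and your distributional reduction---means preserved, variance proxies summed---is exactly the content behind that sentence. Your additional remarks about why lil'UCB's analysis does not require boundedness are a useful sanity check but go beyond what the paper bothers to justify.
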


\subsection{Non-private Algorithm for Bandits Learning with Two-points Feedback}
\label{appendix: non-private MP-BCO}
For completeness, we present the non-private algorithm proposed in \citet{agarwal2010optimal} for Bandits Convex Optimization with two-point feedback. See Algorithm \ref{algorithm:non-private two point bandit}. 
\begin{algorithm}[t!]
 \caption{Expected Gradient Descent with two queries per round \cite{agarwal2010optimal}}
\label{algorithm:non-private two point bandit}
      \textbf{Input}: Learning rate $\eta$, exploration parameter $\rho$ and shrinkage coefficient $\xi$ \\
      Set $y_1 = 0$\\
     \For{$t=1,2, \ldots$}{
      Pick a unit vector $u_t$ uniformly at random \\
      Play $x_{t,1}:=y_t + \rho u_t, x_{t,2}:=y_t - \rho u_t$, and observe $f_t(x_{t,1}), f_t(x_{t,2})$\\
      Set $\tilde{g}_t = \frac{d}{2\rho}\left(f_t(x_{t,1}) - f_t(x_{t,2})\right)u_t$\\
      update $y_{t+1} = \prod_{(1-\xi)\mc{X}}(y_t - \eta \tilde{g}_t)$, where $\prod_{\mc{X}}$ represents projection to the set $\mc{X}$
     }
\end{algorithm}

\subsection{Contextual Linear Bandits with LDP}
\label{appendix: LDP linear bandit}
See Algorithm \ref{algorithm: LDP linear bandit} above.
\begin{algorithm}[t!]
 \caption{Contextual Linear Bandits with LDP}
\label{algorithm: LDP linear bandit}
  \textbf{Input:} privacy parameters $\varepsilon, \delta$, failure probability $\alpha$.\\
  \textbf{Initialize:} $\tilde{V}_0 = 0_{d\times d}, \tilde{u}_0 = 0_d, \tilde{\theta}_0 = 0_d,$
 $\sigma = 6\sqrt{2\ln(2.5/\delta)}/\varepsilon$.\\
  \textbf{Notations:} $\Upsilon_t = \sigma \sqrt{t}(4\sqrt{d} + 2\ln(2T/\alpha)), c_t = 2\Upsilon_t$, $\beta_t = 2\sigma \sqrt{d \ln T} + \left(\sqrt{3\Upsilon_t} + \sigma\sqrt{\frac{dt}{\Upsilon_t}}\right)d\ln T$. 

\For{$t=1,2, \ldots, T$}{
 \textbf{For the local user $t$:}\\
 Receive information $\tilde{V}_{t-1}, \tilde{\theta}_{t-1}$ from the server.\\
 Play action $x_t = \argmax_{x \in \mc{D}_t} \inner{\tilde{\theta}_{t-1}, x} + \beta_t \norm{x}_{(\tilde{V}_{t-1} + c_{t-1}\mr{I})^{-1}}$\\
 Observe reward $y_t = \inner{x_t, \theta^*} + \eta_t$ \\
 Send $x_tx_t^\top + B_t, y_t x_t + \xi_t$ to the server, where $B_t(i,j) \overset{i.i.d}{\sim} \mc{N}(0, \sigma^2), \forall i \leqslant j$, and $B(j,i) = B(i,j), \xi_t \sim \mc{N}(0_d, \sigma^2 \mr{I}_{d\times d})$.\\
 \textbf{For the server:} update\\
 $\tilde{V}_{t} = \tilde{V}_{t-1} + x_tx_t^\top + B_t, \tilde{u}_{t} = \tilde{u}_{t-1} + y_t x_t + \xi_t$\\
 $\tilde{\theta}_t = \left(\tilde{V}_t+c_t \mr{I}_{d\times d}\right)^{-1} \tilde{u}_t$
}
\end{algorithm}

\subsection{Omitted Proofs in Section 3}
\label{appendix:proofs in section 3}

\begin{proof}[Proof of Theorem~\ref{theorem: one point privacy}]
Since for any $x \in \mc{X}, t\in [T]$, $|f_t(x)|\leqslant B$, which means the sensitivity of information sent from the user is at most $2B$, thus $(\varepsilon, \delta)$-LDP property of Algorithm \ref{algorithm:reduction bandit} follows directly from the Gaussian mechanism. 

\end{proof}

\begin{proof}[Proof of Theorem~\ref{theorem: BCO utility}]
Note all noise are independently sampled, hence we can fix $Z_1, \dots Z_T$ in advance. Define pseudo loss $\tilde{f}_t(x) = f_t(x) + Z_t$. According to the tail bound of Gaussian variable, there is 
\begin{align}
    \Pr\left[|Z_t| > \sigma \sqrt{2\ln 2T^2} \right] \leqslant \frac{1}{T^2}
\end{align}
By union bound, we have
\begin{align}
    \Pr\left[\exists t \in [T], |Z_t| > \sigma \sqrt{2\ln 2T^2} \right] \leqslant \frac{1}{T}
\end{align}
Define the event $F := \{\exists t \in [T]: |Z_t| > \sigma \sqrt{2\ln 2T^2}\}$, then there is $\Pr[F] \leqslant \frac{1}{T}$.

Once fixed $Z_1, \dots, Z_T$, the output of running Algorithm \ref{algorithm:reduction bandit} over loss sequence $\{f_t| t\in [T]\}$ is the same as the output of running non-private algorithm $\mc{A}$ over pseudo loss sequence $\{\tilde{f_t}| t\in [T]\}$. 

On one hand, we have
\begin{align}
    \mb{E}\left[\sum_{t=1}^T \tilde{f}_t(x_t) - \tilde{f}_t(x)\right] \leqslant & \mb{E}\left[\sum_{t=1}^T \tilde{f}_t(x_t) - \tilde{f}_t(x) | \bar{F}\right] + \Pr[F] \times \mb{E}\left[\sum_{t=1}^T \tilde{f}_t(x_t) - \tilde{f}_t(x) | F \right] \\
    \leqslant & \mb{E}\left[\sum_{t=1}^T \tilde{f}_t(x_t) - \tilde{f}_t(x) | \bar{F}\right] + 2B \\
    \leqslant & (B+\sigma \sqrt{2\ln (2T^2)})\cdot \Reg^T_{\mc{A}} + 2B
\end{align}
On the other hand, according to our definition of $\tilde{f}_t(x)$, there is always
\begin{align}
    \sum_{t=1}^T \tilde{f}_t(x_t) - \tilde{f}_t(x) = \sum_{t=1}^T f_t(x_t) - f_t(x)
\end{align}
Combine above equations, we obtain the conclusion.

For the high probability version, suppose black-box algorithm $\mc{A}$ guarantees that: for any loss sequence $\{\tilde{f}_t(x)\}$ with loss range $\tilde{B}$, with probability at least $1-\kappa$ (over the internal randomness of $\mc{A}$), there is 
\begin{align}
    \forall x \in \mc{X}, \sum_t \tilde{f}_t(x_t) - \sum_t \tilde{f}_t(x) \leqslant \tilde{B} \cdot \mr{Reg}^T_{\mc{A}}
\end{align}
According to union bound and above discussion, we know: $\tilde{B}=B+\sigma \sqrt{2\ln (2T^2)}$, and with probability at least $1-\kappa - \frac{1}{T}$, there is 
\begin{align}
    \forall x \in \mc{X}, \sum_t f_t(x_t) - \sum_t f_t(x) \leqslant \tilde{\mc{O}}\left(\frac{B\ln (T/\delta)}{\varepsilon}\cdot\Reg^T_{\mc{A}}\right)
\end{align}
\end{proof}

\begin{proof}[Proof of Corollary~\ref{cor: concrete BCO utility}]
The guarantee for (strongly) convex and smooth bandit optimization is straightforward by plugging corresponding non-private guarantees in \citet{saha2011improved,hazan2014bandit}. For Stochastic BCO, since our algorithm is equivalent to the case of running any stochastic BCO algorithm over new noise distribution $\mc{Q} \bigotimes \mc{N}(0, \sigma^2)$, where $\bigotimes$ represents the convolution between two distributions, we can use the guarantee for stochastic BCO in \citet{agarwal2011stochastic}.  
\end{proof}

\begin{proof}[Proof of Theorem~\ref{theorem: private MAB both worlds}]
In adversarial setting, using Theorem \ref{theorem: BCO utility} obtains the regret bound. Now we prove the regret bound in stochastic setting. Note for any $k \in [d]$, as the support of original distribution $\mc{V}_k$ is over $[0,1]$, it is a sub-Gaussian distribution with variance proxy $\frac{1}{4}$. Define pseudo distribution $\tilde{\mc{V}}_k = \mc{V}_k \bigotimes \mc{N}(0, \sigma^2)$, where $\bigotimes$ represents the convolution between two distributions. Obviously, the output of Algorithm \ref{algorithm:reduction bandit} over distributions $\{\mc{V}_k | k \in [K]\}$ is the same as the output of non-private algorithm $\mc{A}$ over distributions $\{\tilde{\mc{V}}_k | k \in [K]\}$. As $\tilde{\mc{V}}_k$ is now a sub-Gaussian with variance proxy $\frac{1}{4} + \sigma^2$, hence it's not hard to obtain the conclusion according to the guarantee of $\mc{A}$. 
\end{proof}

\begin{proof}[Proof of Theorem~\ref{BAI utility}]
Just use Theorem 2 in the paper \citet{jamieson2014best} with new sub-Gaussian parameter $\frac{1}{4} + \sigma^2$ 
\end{proof}

\begin{proof}[Proof of Theorem~\ref{two point privacy}]
Since $\abs{f_t(x_{t,1}) - f_t(x_{t,2})} \leqslant 2\rho G \norm{u_t}_2=2\rho G$ and $n_t^\top(x_{t,1}-x_{t,2}) = 2\rho n_t^\top u_t$ which obeys $\mc{N}(0, 4\rho^2\sigma^2)$, the privacy guarantee then follows according to Gaussian mechanism.
\end{proof}

\begin{proof}[Proof of Theorem~\ref{theorem: two point utility}]
Note all noise vectors are independently sampled, hence we can fix $n_1, \dots, n_T$ in advance. Define pseudo loss $\tilde{f}_t(x) = f_t(x) + n^\top_t x$. For any $\{u_t | t \in [T]\}$ in the unit sphere, according to the tail bound of Gaussian variable, there is 
\begin{align}
    \Pr\left[|n^\top_t u_t| > \sigma \sqrt{2\ln 2T^2} \right] \leqslant \frac{1}{T^2}
\end{align}
By union bound, we have
\begin{align}
    \Pr\left[\exists t \in [T], |n^\top_t u_t| > \sigma \sqrt{2\ln 2T^2} \right] \leqslant \frac{1}{T}
\end{align}
Define the event $F := \{\exists t \in [T]: |n^\top_t u_t| > \sigma \sqrt{2\ln 2T^2}\}$, then there is $\Pr[F] \leqslant \frac{1}{T}$.

Once fixed $n_1, \dots, n_T$, the output of running Algorithm \ref{algorithm:reduction two point bandit} over loss sequence $\{f_t| t\in [T]\}$ is the same as the output of running non-private Algorithm \ref{algorithm:non-private two point bandit} over pseudo loss sequence $\{\tilde{f_t}| t\in [T]\}$. 

On one hand, we have
\begin{align}
    & \mb{E}\left[\frac{1}{2}\sum_{t=1}^T \left(\tilde{f}_t(x_{t,1})+\tilde{f}_t(x_{t,2})\right) - \tilde{f}_t(x)\right] \\
    \leqslant & \mb{E}\left[\frac{1}{2}\sum_{t=1}^T \left(\tilde{f}_t(x_{t,1})+\tilde{f}_t(x_{t,2})\right) - \tilde{f}_t(x) | \bar{F}\right] + \Pr[F] \times \mb{E}\left[\frac{1}{2}\sum_{t=1}^T \left(\tilde{f}_t(x_{t,1})+\tilde{f}_t(x_{t,2})\right) - \tilde{f}_t(x) | F \right] \\
    \leqslant & \mb{E}\left[\frac{1}{2}\sum_{t=1}^T \left(\tilde{f}_t(x_{t,1})+\tilde{f}_t(x_{t,2})\right) - \tilde{f}_t(x) | \bar{F}\right] + 2B \\
    \leqslant & \Reg(\mc{A}, G+\sigma\sqrt{d}) + 2B
\end{align}
where $\Reg(\mc{A}, G+\sigma\sqrt{d})$ represents the regret bound of non-private Algorithm \ref{algorithm:non-private two point bandit} for loss functions with Lipschitz constant $G+\sigma\sqrt{d}$.
On the other hand, there is 
\begin{align}
    \mb{E}\left[\sum_{t=1}^T \tilde{f}_t(x_t) - \tilde{f}_t(x)\right] = \mb{E}\left[\sum_{t=1}^T f_t(x_t) - f_t(x)\right]
\end{align}
Combine above equations with the guarantee of non-private Algorithm \ref{algorithm:non-private two point bandit} in \citet{agarwal2010optimal}, we obtain the conclusion.{}
\end{proof}

\subsection{Omitted Proofs in Section 4}
\label{appendix:proofs in section 4}
\begin{proof}[Proof of Theorem \ref{theorem: linear bandit privacy}]
    Since $\norm{x_t} \leqslant 1, y_t \in [-2,2]$ according to our assumption, the privacy guarantee then follows directly from the Gaussian Mechanism, as both the matrix and vector sent to the server satisfy $(\varepsilon/3, \delta/2)$-LDP and $(2\varepsilon/3, \delta/2)$-LDP respectively.
\end{proof}

\begin{proof}[Proof of Theorem \ref{theorem: linear bandit utility}]
    Note our private matrix $\tilde{V}_t$ is an unbiased estimation of true matrix $\sum_{s=1}^t x_s x_s^\top$ with noise $H_t := \sum_{s=1}^t B_s$, where its upper triangular entry obeys the distribution $\mc{N}(0, t\sigma^2)$. Similarly, $\tilde{u}_t$ is an unbiased estimation of true vector $\sum_{s=1}^t y_s x_s$ with noise $h_t := \sum_{s=1}^t \xi_s$, where $h_t \sim \mc{N}(0_d, t\sigma^2 \mr{I}_{d\times d})$. According to the concentration inequality \cite{vershynin2018high}, we know $\norm{H_t}_2 \leqslant \sigma \sqrt{t}(4\sqrt{d} + 2\ln(2T/\alpha)) = \Upsilon_t$ with probability at least $1-\alpha/2T$, thus all the eigenvalues of $H_t + c_t \mr{I}_{d\times d}$ are in the range $[\Upsilon_t, 3\Upsilon_t]$ with high probability. Besides, we have $\norm{h_t}_{(H_t + c_t \mr{I}_{d\times d})^{-1}} \leqslant \sqrt{\Upsilon_t^{-1}}\norm{h_t}_2$, and $\norm{h_t}_2 \leqslant \sigma \sqrt{dt}$ with high probability. Now, using Proposition 4, Proposition 11 and Theorem 5 in paper \cite{shariff2018differentially} with our noise, we obtain the conclusion.  
\end{proof}

\begin{proof}[Proof of Theorem \ref{theorem: gl bandit privacy}]
    Since $\norm{x_t} \leqslant 1, \abs{z_t} \leqslant 1$, and loss function $\ell_t$ is $C$-Lipschitz, the privacy guarantee follows directly from the Gaussian Mechanism, as the matrix, vector, and gradient of any user sent to the server satisfy $(\varepsilon/3, \delta/3)$-LDP respectively.
\end{proof}

\begin{proof}[Proof of Theorem \ref{theorem: gl bandit utility}]
    Define instantaneous regret $r_t:g(x_{t,*}^\top\theta^*)-g(x_t^\top\theta^*)$, then there is $r_t \leqslant G(x_{t,*}^\top\theta^* - x_t^\top\theta^*)$. Besides
    \begin{align*}
        x_t^\top\theta^* + 2\beta_{t-1}\norm{x_t}_{\tilde{V}^{-1}_{t-1}} & \geqslant  x_t^\top\theta^* + \norm{\tilde{\theta}_{t-1}-\theta^*}_{\tilde{V}_{t-1}}\norm{x_t}_{\tilde{V}^{-1}_{t-1}} + \beta_{t-1}\norm{x_t}_{\tilde{V}^{-1}_{t-1}} \\
        & \geqslant x_t^\top\tilde{\theta}_{t-1} + \beta_{t-1}\norm{x_t}_{\tilde{V}^{-1}_{t-1}} \\
        & \geqslant x_{t,*}^\top\tilde{\theta}_{t-1} + \beta_{t-1}\norm{x_{t,*}}_{\tilde{V}^{-1}_{t-1}} \\
        & \geqslant x_{t,*}^\top \theta^*
    \end{align*}
    where the second and the forth inequality is because of our Confidence Ellipsoid Lemma \ref{lemma: confidence ellipsoid}. Thus we have $r_t \leqslant 2G\beta_{t-1}\norm{x_t}_{\tilde{V}^{-1}_{t-1}}$.
    
    Next using common technique in contextual bandits to bound $\sum_t \norm{x_t}_{\tilde{V}^{-1}_{t-1}}$ \cite{shariff2018differentially,jun2017scalable}, we have $\sum_t r_t \leqslant G\beta_T \sqrt{dT\log T}$, which finishes the proof.
\end{proof}

\begin{lemma}[Regret of LDP-OGD]
    \label{lem: regret of LDP-OGD}
    For any convex loss sequence $\{\ell_t(\theta) | t\in [T]\}$ with Lipschitz constant $C$, and for $\forall \theta \in \Theta$, with probability at least $1 - \alpha_1$, we have the following bound 
    \begin{align}
        \quad \sum_{t=1}^T \ell_t(\hat{\theta}_t) - \ell_t(\theta) \leqslant \mc{O}\left(C\sigma\sqrt{dT\ln \frac{T}{\alpha_1}}\right)
    \end{align}
    where $\{\hat{\theta}_t | t \in [T]\}$ are outputs of noisy OGD like step 13 in Algorithm \ref{algorithm: GLM bandit}, and the randomness is over noise $\{r_t | t \in [T]\}$.
\end{lemma}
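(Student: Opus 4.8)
The plan is to recognize Line~\ref{step:GLBandits-OGD} of Algorithm~\ref{algorithm: GLM bandit} as projected online gradient descent run with the \emph{noisy} subgradient $\tilde{g}_t := \nb\ell_t(\hat{\theta}_t) + r_t$, and to carry out the standard potential-function regret analysis for OGD while treating the injected Gaussian terms $\{r_t\}$ as a perturbation that is controlled with high probability. Writing $g_t := \nb\ell_t(\hat{\theta}_t)$, the first step is the usual one-step inequality: for any fixed comparator $\theta\in\mc{W}$, non-expansiveness of the Euclidean projection $\Pi_{\mc{W}}$ gives $\norm{\hat{\theta}_{t+1}-\theta}_2^2 \leqslant \norm{\hat{\theta}_t - \zeta\tilde{g}_t - \theta}_2^2$, which rearranges to
\begin{equation}
\tilde{g}_t^\top(\hat{\theta}_t - \theta) \leqslant \frac{\norm{\hat{\theta}_t-\theta}_2^2 - \norm{\hat{\theta}_{t+1}-\theta}_2^2}{2\zeta} + \frac{\zeta}{2}\norm{\tilde{g}_t}_2^2.
\end{equation}
Summing over $t\in[T]$, telescoping the first term, substituting $\tilde{g}_t = g_t + r_t$, and invoking convexity ($\ell_t(\hat{\theta}_t)-\ell_t(\theta)\leqslant g_t^\top(\hat{\theta}_t-\theta)$) yields the master bound
\begin{equation}
\sum_{t=1}^T \ell_t(\hat{\theta}_t)-\ell_t(\theta) \leqslant \frac{\norm{\hat{\theta}_1-\theta}_2^2}{2\zeta} + \frac{\zeta}{2}\sum_{t=1}^T \norm{\tilde{g}_t}_2^2 - \sum_{t=1}^T r_t^\top(\hat{\theta}_t-\theta).
\end{equation}

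Next I would control the three terms on the right. Since $\mc{W}$ sits inside the unit ball, $\norm{\hat{\theta}_1-\theta}_2^2 \leqslant 4$, so the first term is $\mc{O}(1/\zeta)$. For the second term, $C$-Lipschitzness gives $\norm{g_t}_2\leqslant C$, hence $\norm{\tilde{g}_t}_2^2 \leqslant 2C^2 + 2\norm{r_t}_2^2$; because each $r_t\sim\mc{N}(0_d, C^2\sigma^2\mr{I}_d)$, a chi-square tail bound together with a union bound over $t\in[T]$ gives $\norm{r_t}_2^2 \leqslant \mc{O}(C^2\sigma^2(d+\ln(T/\alpha_1)))$ simultaneously for all $t$ with probability $1-\alpha_1/2$, so that $\sum_t\norm{\tilde{g}_t}_2^2 \leqslant \mc{O}(TC^2\sigma^2(d+\ln(T/\alpha_1)))$. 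For the third term, note $\hat{\theta}_t$ is measurable with respect to $r_1,\dots,r_{t-1}$ and is therefore independent of $r_t$, so $\{-r_t^\top(\hat{\theta}_t-\theta)\}_t$ is a martingale difference sequence whose increments are, conditionally on the past, zero-mean Gaussian with variance $C^2\sigma^2\norm{\hat{\theta}_t-\theta}_2^2\leqslant 4C^2\sigma^2$; an Azuma/sub-Gaussian martingale concentration bound then gives $\abs{\sum_t r_t^\top(\hat{\theta}_t-\theta)} \leqslant \mc{O}(C\sigma\sqrt{T\ln(1/\alpha_1)})$ with probability $1-\alpha_1/2$. Finally, choosing $\zeta = \Theta(1/\sqrt{T})$ (with the constant absorbing the $C\sigma\sqrt{d}$ scale so as to balance the first two terms) makes the dominant contribution $\mc{O}(C\sigma\sqrt{dT})$, and folding in the logarithmic factors from the two concentration steps and a union bound over the two failure events yields the claimed $\mc{O}(C\sigma\sqrt{dT\ln(T/\alpha_1)})$ with probability at least $1-\alpha_1$.

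The deterministic OGD skeleton is routine; the part that requires care—and the main obstacle—is the high-probability control of the two stochastic terms. The squared-norm sum needs a sub-exponential (chi-square) tail for $\norm{r_t}_2^2$ holding uniformly over all $T$ rounds, and, more delicately, the linear noise term $\sum_t r_t^\top(\hat{\theta}_t-\theta)$ cannot be treated as a sum of independent variables, because the iterate $\hat{\theta}_t$ is itself a function of the earlier noise; this adaptivity is precisely why a martingale concentration argument, rather than a plain Gaussian tail bound, is required. One must also split the step size $\zeta$ and the failure-probability budget $\alpha_1$ consistently across the two high-probability events so that everything holds jointly.
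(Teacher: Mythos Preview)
Your proposal is correct and follows essentially the same route as the paper's proof. The only presentational difference is that the paper packages the deterministic part by viewing noisy OGD as ordinary OGD run on the linearly perturbed losses $\theta\mapsto \ell_t(\theta)+r_t^\top\theta$ (invoking the black-box OGD regret bound after conditioning on $\{\norm{r_t}_2\leqslant \sqrt{d}\,C\sigma\ \forall t\}$), whereas you unfold the potential-function inequality explicitly; both arguments then finish by bounding $\sum_t r_t^\top(\hat{\theta}_t-\theta)$ via martingale concentration.
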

\begin{proof}
    Condition on the event $\mc{E} = \{\forall t \in [T], \norm{r_t}_2 \leqslant \sqrt{d}C\sigma \}$ (which happens with high probability) and according to the guarantee of On-line Gradient Descent \cite{hazan2016introduction}, there is $\sum_t \ell_t(\hat{\theta}_t) + r_t^\top \hat{\theta}_t - (\ell_t(\theta) + r_t^\top \theta) \leqslant \mc{O}(C\sigma \sqrt{dT})$. Next, using martingale concentration, we know $\norm{\sum_t r_t^\top \hat{\theta}_t}_2 \leqslant \sigma\sqrt{dT}$ and $\norm{\sum_t r_t^\top \theta}_2 \leqslant C\sigma \sqrt{dT}$ with high probability. Combining above three inequalities, we obtain the conclusion.
\end{proof}

\begin{lemma}[Confidence Ellipsoid]
    \label{lemma: confidence ellipsoid}
    In terms of Algorithm \ref{algorithm: GLM bandit}, with probability at least $1 - \alpha_2$, we have the following bound 
    \begin{align}
        \forall t, \quad \norm{\tilde{\theta}_t - \theta^*}_{\tilde{V}_t}^2 \leqslant \tilde{\mc{O}}\left(\frac{C\sigma}{\mu}\sqrt{dT\ln \frac{T}{\alpha_2}}\right)
    \end{align}
    where $\{\tilde{\theta}_t, \tilde{V}_t | t \in [T]\}$ are outputs of Algorithm \ref{algorithm: GLM bandit}, and the randomness is over the injected noise as well as underlying environment.
\end{lemma}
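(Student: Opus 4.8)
The plan is to follow the GLOC confidence-set analysis of \citet{jun2017scalable}, but to track carefully how the two injected Gaussian perturbations---the matrix noise $B_s$ and the vector noise $\xi_s$---propagate into the ellipsoid, and to replace their Online Newton Step regret by the noisy-OGD regret of Lemma \ref{lem: regret of LDP-OGD}. Writing $M_t=\sum_{s\le t}x_sx_s^\top$, $H_t=\sum_{s\le t}B_s$, $h_t=\sum_{s\le t}\xi_s$, and $\epsilon_s:=z_s-x_s^\top\theta^*=x_s^\top(\hat\theta_s-\theta^*)$, a direct computation from $\tilde\theta_t=\tilde V_t^{-1}\tilde u_t$ gives the identity $\tilde V_t(\tilde\theta_t-\theta^*)=\sum_{s\le t}\epsilon_s x_s+h_t-H_t\theta^*-c_t\theta^*$, so that $\|\tilde\theta_t-\theta^*\|_{\tilde V_t}=\|\sum_{s\le t}\epsilon_s x_s+h_t-H_t\theta^*-c_t\theta^*\|_{\tilde V_t^{-1}}$. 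The whole proof reduces to bounding the four terms on the right in the $\tilde V_t^{-1}$-norm.

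The central quantity is $\sum_s\epsilon_s^2$, and the first step is to control it through the online learner. Since $m'=g$ and $\inf g'=\mu>0$, the scalar map $m$ is $\mu$-strongly convex on $[-1,1]$, so expanding $\ell_s(\theta)=-y_s(x_s^\top\theta)+m(x_s^\top\theta)$ and using $y_s=g(x_s^\top\theta^*)+\eta_s$ yields the pointwise inequality $\ell_s(\hat\theta_s)-\ell_s(\theta^*)\ge -\eta_s\epsilon_s+\tfrac{\mu}{2}\epsilon_s^2$. Summing over $s$ and upper-bounding the left side by the noisy-OGD regret $B_T=\mc{O}(C\sigma\sqrt{dT\ln(T/\alpha_2)})$ of Lemma \ref{lem: regret of LDP-OGD} (with its failure probability set to a constant fraction of $\alpha_2$) gives $\tfrac{\mu}{2}\sum_s\epsilon_s^2\le B_T+\sum_s\eta_s\epsilon_s$. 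Because $\epsilon_s$ is $\mc{F}_{s-1}$-measurable while $\eta_s$ is fresh mean-zero noise bounded by $1$, the sum $\sum_s\eta_s\epsilon_s$ is a martingale, so a self-normalized (Freedman-type) bound controls it by $\tilde{\mc{O}}(\sqrt{(\sum_s\epsilon_s^2)\ln(1/\alpha_2)})$; absorbing this into the left-hand side by AM--GM gives $\sum_s\epsilon_s^2\le\tilde{\mc{O}}(\tfrac{C\sigma}{\mu}\sqrt{dT\ln(T/\alpha_2)})$.

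Next I would certify that $\tilde V_t$ is well-conditioned. Reusing the matrix concentration already invoked in the proof of Theorem \ref{theorem: linear bandit utility}, $\|H_t\|_2\le\Upsilon_t$ and $\|h_t\|_2\le\sigma\sqrt{dt}$ hold simultaneously with high probability; since $c_t=2\Upsilon_t$, this forces $\tilde V_t=M_t+H_t+c_t\mr{I}\succeq M_t+\Upsilon_t\mr{I}\succ0$, and in particular $\lambda_{\min}(\tilde V_t)\ge\Upsilon_t$. The three noise terms are then immediate: $\|h_t\|_{\tilde V_t^{-1}}^2\le\Upsilon_t^{-1}\|h_t\|_2^2$, $\|H_t\theta^*\|_{\tilde V_t^{-1}}^2\le\Upsilon_t^{-1}\|H_t\|_2^2\|\theta^*\|_2^2$ and $\|c_t\theta^*\|_{\tilde V_t^{-1}}^2\le\Upsilon_t^{-1}c_t^2$, each of order $\tilde{\mc{O}}(\Upsilon_t)=\tilde{\mc{O}}(\sigma\sqrt{dt})$ using $\|\theta^*\|_2\le1$. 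For the leading term I would use $\tilde V_t\succeq M_t+\Upsilon_t\mr{I}=:\bar M_t$ to get $\|\sum_s\epsilon_s x_s\|_{\tilde V_t^{-1}}^2\le\|\sum_s\epsilon_s x_s\|_{\bar M_t^{-1}}^2$, then the operator/Frobenius bound $\|\bar M_t^{-1/2}\sum_s\epsilon_s x_s\|_2^2\le(\sum_s\epsilon_s^2)(\sum_s\|x_s\|_{\bar M_t^{-1}}^2)$ together with $\sum_s\|x_s\|_{\bar M_t^{-1}}^2=\trace{\bar M_t^{-1}M_t}\le d$. This turns the first term into $d\sum_s\epsilon_s^2=\tilde{\mc{O}}(\tfrac{C\sigma}{\mu}\sqrt{dT})$ up to dimension factors. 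Combining the four bounds, merging the failure events of Steps~1 and 3 into a single $\alpha_2$, and noting that every estimate is uniform in $t$ yields the claim.

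The hard part is the coupling between privacy and estimation. Unlike the non-private GLOC proof, the regression target noise enters through $H_t$ and $h_t$, so the argument only closes if the deterministic shift $c_t\mr{I}$ is calibrated exactly to the high-probability operator-norm bound on $H_t$; this is what simultaneously keeps $\tilde V_t$ positive definite and keeps the three noise terms of order $\sqrt{\Upsilon_t}$ rather than unbounded. A second delicate point is Step~1: $\epsilon_s$ is itself a function of the privacy noise through $\hat\theta_s$, so the martingale bound on $\sum_s\eta_s\epsilon_s$ must be a self-normalized one that does \emph{not} presuppose a bound on $\sum_s\epsilon_s^2$, and the AM--GM absorption has to be carried out before $\sum_s\epsilon_s^2$ is known. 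Getting these dependencies in the right order, and verifying that all the resulting poly-logarithmic and dimension factors are swallowed by $\tilde{\mc{O}}$, is where the real care is needed.
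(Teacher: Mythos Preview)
Your plan is correct and follows the same GLOC-style skeleton as the paper: both proofs first exploit the $\mu$-strong convexity of $\ell(\cdot,b)$ together with the noisy-OGD regret bound (Lemma~\ref{lem: regret of LDP-OGD}) and a self-normalized martingale inequality on $\sum_s\eta_s\epsilon_s$ to obtain $\sum_{s\le t}\epsilon_s^2\le\tilde{\mc{O}}\!\bigl(\tfrac{C\sigma}{\mu}\sqrt{dt}\bigr)$; this part is essentially identical to the paper, including the delicate ordering issues you flag at the end.

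The two arguments diverge only in the bridge from $\sum_s\epsilon_s^2$ to $\norm{\tilde\theta_t-\theta^*}_{\tilde V_t}^2$. You use the linear identity $\tilde V_t(\tilde\theta_t-\theta^*)=\sum_s\epsilon_sx_s+h_t-H_t\theta^*-c_t\theta^*$ and bound the four pieces separately in the $\tilde V_t^{-1}$-norm; the paper instead rewrites $\sum_s\epsilon_s^2$ as a quadratic in $\theta^*$ and completes the square around $\tilde\theta_t$, obtaining directly $\norm{\tilde\theta_t-\theta^*}_{\tilde V_t}^2\le\sum_s\epsilon_s^2+\norm{\theta^*}_{H_t+c_t\mr{I}}^2+2\norm{h_t}_2+2h_t^\top\tilde\theta_t$. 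The paper's route is tighter by a factor of $d$ on the leading term (since it avoids your Cauchy--Schwarz step $\norm{\sum_s\epsilon_sx_s}_{\bar M_t^{-1}}^2\le d\sum_s\epsilon_s^2$; in fact the sharper bound $\norm{X_t^\top\epsilon}_{(M_t+\Upsilon_t\mr{I})^{-1}}^2\le\norm{\epsilon}_2^2$ would also close this gap in your argument), but this extra $d$ is swallowed by $\tilde{\mc{O}}$ under the paper's convention. Conversely, your decomposition sidesteps the paper's somewhat ad~hoc control of the cross term $h_t^\top\tilde\theta_t$, which the paper handles by arguing that the regularized least-squares solution $\tilde V_t^{-1}X_t^\top Z_t$ has bounded norm. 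Both routes are valid.
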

\begin{proof}
    Since $\inf_{a\in (-1,1)} g'(a) = \mu > 0$, it implies loss function $\ell(a, b)$ is $\mu$-strongly convex in terms of the first argument, thus
    \begin{align}
        \sum_{s=1}^t \ell_s(\hat{\theta}_s) - \ell_s(\theta^*) & = \sum_{s=1}^t \ell(x_s^\top \hat{\theta}_s, y_s) - \ell(x_s^\top \theta^*, y_s) \\
        & \geqslant \sum_{s=1}^t \ell'(x_s^\top\theta^*, y_s)(x_s^\top\hat{\theta}_s - x_s^\top\theta^*) + \frac{\mu}{2} (x_s^\top\hat{\theta}_s - x_s^\top\theta^*)^2 \\
        & = \sum_{s=1}^t (-y_s + g(x_s^\top\theta^*)) (x_s^\top\hat{\theta}_s - x_s^\top\theta^*) + \frac{\mu}{2} (x_s^\top\hat{\theta}_s - x_s^\top\theta^*)^2 \\
        & = \sum_{s=1}^t -\eta_s (x_s^\top\hat{\theta}_s - x_s^\top\theta^*) + \frac{\mu}{2} (x_s^\top\hat{\theta}_s - x_s^\top\theta^*)^2
    \end{align}
    Then according to Lemma \ref{lem: regret of LDP-OGD} above, with probability at least $1-\alpha_1$, there is 
    \begin{align}
        \frac{\mu}{2} \sum_{s=1}^t (x_s^\top\hat{\theta}_s - x_s^\top\theta^*)^2 \leqslant \mc{O}\left(C\sigma\sqrt{dt\ln \frac{T}{\alpha_1}}\right) + \sum_{s=1}^t \eta_s (x_s^\top\hat{\theta}_s - x_s^\top\theta^*)
    \end{align}
    Using Corollary 8 in paper \cite{abbasi2012online}, with probability at least $1-\alpha_3$ (over the randomness of noise $\{\eta_t\}$), for all $t$, there is 
    \begin{align}
        \sum_{s=1}^t \eta_s (x_s^\top\hat{\theta}_s - x_s^\top\theta^*) \leqslant \sqrt{\left(2+2\sum_{s=1}^t (x_s^\top(\hat{\theta}_s - \theta^*))^2\right) \cdot \ln \left(\frac{1}{\alpha_3}\sqrt{1+\sum_{s=1}^t (x_s^\top(\hat{\theta}_s - \theta^*))^2}\right)  }
    \end{align}
    Combine above two inequalities, and solve the right hand side using Lemma 2 in paper \cite{jun2017scalable}, then with probability $1-\alpha_1 - \alpha_3$, we have 
    \begin{align}
        \forall t, \quad \sum_{s=1}^t (x_s^\top(\hat{\theta}_s - \theta^*))^2 \leqslant \tilde{\mc{O}}\left(\frac{C\sigma}{\mu}\sqrt{dt\ln \frac{T}{\alpha_1}\ln \frac{T}{\alpha_3}}\right)
    \end{align}
    
    Denote $X_t \in \mb{R}^{t\times d}$ as the design matrix consisting of $x_1,\dots, x_t$, $Z_t = [z_1;z_2;\dots;z_t] \in \mb{R}^t, \bar{B}_t = \sum_{s=1}^t B_s, \bar{\xi}_t = \sum_{s=1}^t \xi_s$. Note 
    \begin{align}
        & \sum_{s=1}^t (x_s^\top(\hat{\theta}_s - \theta^*))^2 \\
        = & \norm{\theta^*}^2_{X_t^\top X_t} - 2Z_t^\top X_t \theta^* + \norm{Z_t}_2^2 \\
        = & \norm{\theta^*}^2_{\tilde{V}_t} - 2\tilde{u}_t^\top \theta^* + \norm{Z_t}_2^2 - \norm{\theta^*}^2_{\bar{B}_t + \tilde{V}_0} + 2\bar{\xi}_t^\top \theta^* \\
        = & \norm{\theta^* - \tilde{\theta}_t}^2_{\tilde{V}_t} - \norm{\tilde{\theta}_t}^2_{\tilde{V}_t} + \norm{Z_t}_2^2 - \norm{\theta^*}^2_{\bar{B}_t + \tilde{V}_0} + 2\bar{\xi}_t^\top \theta^* \\
        = & \norm{\theta^* - \tilde{\theta}_t}^2_{\tilde{V}_t} + \norm{X_t \tilde{\theta}_t - Z_t}_2^2 - \norm{\tilde{\theta}_t}^2_{X^\top_tX_t} + 2\tilde{\theta}_t^\top X_t^\top Z_t - \norm{\tilde{\theta}_t}^2_{\tilde{V}_t} - \norm{\theta^*}^2_{\bar{B}_t + \tilde{V}_0} + 2\bar{\xi}_t^\top \theta^* \\
        = & \norm{\theta^* - \tilde{\theta}_t}^2_{\tilde{V}_t} + \norm{X_t \tilde{\theta}_t - Z_t}_2^2 - \norm{\tilde{\theta}_t}^2_{X^\top_tX_t} + 2\tilde{\theta}_t^\top \tilde{u}_t - \norm{\tilde{\theta}_t}^2_{\tilde{V}_t} - \norm{\theta^*}^2_{\bar{B}_t + c_t \mr{I}} + 2\bar{\xi}_t^\top (\theta^*-\tilde{\theta}_t) \\
        = & \norm{\theta^* - \tilde{\theta}_t}^2_{\tilde{V}_t} + \norm{X_t \tilde{\theta}_t - Z_t}_2^2 - \norm{\tilde{\theta}_t}^2_{X^\top_tX_t} + 2\norm{\tilde{\theta}_t}_{\tilde{V}_t}^2 - \norm{\tilde{\theta}_t}^2_{\tilde{V}_t} - \norm{\theta^*}^2_{\bar{B}_t + c_t \mr{I}} + 2\bar{\xi}_t^\top (\theta^*-\tilde{\theta}_t) \\
        = & \norm{\theta^* - \tilde{\theta}_t}^2_{\tilde{V}_t} + \norm{X_t \tilde{\theta}_t - Z_t}_2^2 +\norm{\tilde{\theta}_t}^2_{\bar{B}_t + c_t \mr{I}} - \norm{\theta^*}^2_{\bar{B}_t + c_t \mr{I}} + 2\bar{\xi}_t^\top (\theta^*-\tilde{\theta}_t) \\
        \geqslant & \norm{\theta^* - \tilde{\theta}_t}^2_{\tilde{V}_t} - \norm{\theta^*}^2_{\bar{B}_t + c_t \mr{I}} - 2 \norm{\bar{\xi}_t}_2 - 2\bar{\xi}_t^\top\tilde{\theta}_t
    \end{align}
    Combine above inequalities, there is 
    \begin{align}
        \norm{\theta^* - \tilde{\theta}_t}^2_{\tilde{V}_t} \leqslant \tilde{\mc{O}}\left(\frac{C\sigma}{\mu}\sqrt{dt\ln \frac{T}{\alpha_1}\ln \frac{T}{\alpha_3}}\right) + \norm{\theta^*}^2_{\bar{B}_t + c_t \mr{I}} + 2 \norm{\bar{\xi}_t}_2 + 2\bar{\xi}_t^\top\tilde{\theta}_t
    \end{align}
    On the other hand, with probability at least $1-\alpha_4$, there is 
    \begin{align}
        \norm{\theta^*}^2_{\bar{B}_t + c_t \mr{I}} \leqslant \tilde{\mc{O}}(\Upsilon_t)=\tilde{\mc{O}}(\sigma\sqrt{dt}) \\
        \norm{\bar{\xi}_t}_2 \leqslant \tilde{\mc{O}}(\sigma\sqrt{dt})
    \end{align}
    and 
    \begin{align}
        \bar{\xi}_t^\top\tilde{\theta}_t & \leqslant \bar{\xi}_t^\top \tilde{V}^{-1}_t(X_t^\top Z_t + \bar{\xi}_t) \\
        & \leqslant \bar{\xi}_t^\top \tilde{V}^{-1}_t X_t^\top Z_t + \tilde{\mc{O}}(\sigma\sqrt{dt}) \\
        & \leqslant \tilde{\mc{O}}(\sigma\sqrt{dt}) 
    \end{align}
    where the last inequality is because $\tilde{V}^{-1}_t X_t^\top Z_t$ is the solution of regularized least square loss function $J(\theta) := \norm{X_t\theta - Z_t}_2^2 + \norm{\theta}^2_{c_t\mr{I}+\bar{B}_t}$. Since $J(\theta^*)\leqslant \tilde{\mc{O}}(\sigma\sqrt{dt})$, and $\Upsilon_t \mr{I} \leqslant c_t\mr{I}+\bar{B}_t \leqslant 3\Upsilon_t \mr{I}$ holds with high probability, there is $\norm{\tilde{V}^{-1}_t X_t^\top Z_t} \leqslant \tilde{O}(1)$, otherwise it cannot be the solution of $J(\theta)$.
    
    Thus, with probability at least $1-\alpha_1-\alpha_3$, we have 
    \begin{align}
        \norm{\tilde{\theta}_t - \theta^*}_{\tilde{V}_t}^2 \leqslant \mc{O}\left(\frac{C\sigma}{\mu}\sqrt{dt\ln \frac{T}{\alpha_1}\ln \frac{T}{\alpha_3}}\right)
    \end{align}
    Taking a union bound over all $T$ rounds and choose appropriate $\alpha_1,\alpha_3$ we then finish the proof.
\end{proof}

\subsection{Discussion about Lower Bound in LDP Contextual Bandits}
\label{appendix: lower bound}
Either for contextual linear bandits or more complex generalized linear bandits, both of our algorithms with LDP guarantee can only achieve $\tilde{\mc{O}}(T^{3/4})$ regret, contrasted with optimal $\mc{O}(T^{1/2})$ regret in non-private case \cite{li2017provably}, as well nearly optimal $\tilde{\mc{O}}(T^{1/2})$ regret for MAB with LDP guarantee. The critical difference is that we need to protect more information in contextual bandits. If we regard MAB as a special case of contextual bandits, decision set $\mc{X}_t$ then becomes $\{e_i | i \in [d]\}$. Privacy of contexts means we need to protect $(e_{I_t}, r_t)$ sent from user $t$ to the server at round $t$, where $I_t$ is the chosen arm and $r_t$ is the reward of user $t$. Recall in Section \ref{subsec:one point feedback}, we only protect $r_t$. Denote $\theta_t$ as the estimation of underlying $\theta^*$ at round $t$, and define $M_t:= \sum_{\tau=1}^t e_{I_\tau} e_{I_\tau}^\top$. Roughly speaking, in almost all analysis of stochastic MAB, the regret bound depends on $\tilde{\mc{O}}(\sqrt{T}\norm{\theta_T - \theta^*}_{M_T})$, and $\norm{\theta_T - \theta^*}_{M_T}$ is nearly a constant in either non-private setting or our MAB example in Appendix \ref{appendix: private MAB}. However in the setting of this section, on one hand, for those sub-optimal arms $i$, the algorithm won't play it too much, and its estimation error $|\theta_T(i)-\theta^*(i)|$ is roughly in constant order. On the other hand, since we still need to protect $e_{I_t}$ at each round, which will lead to an estimation error of $M_T$ in order $\sqrt{T}$. Therefore $\norm{\theta_T - \theta^*}_{M_T}$ is roughly in order $\tilde{\mc{O}}(T^{1/4})$ under LDP setting, which leads to the final $\tilde{\mc{O}}(T^{3/4})$ regret. Though this special case looks a little strange, it shows an inherent difficulty in contextual bandits learning with LDP guarantee, and we conjecture that $\Omega(T^{3/4})$ is exactly the lower bound in this case.

\end{document}